\documentclass[11pt]{scrartcl}

\usepackage[utf8]{inputenc}
\usepackage[linesnumbered,ruled,vlined]{algorithm2e}
\usepackage{nameref}
\usepackage[colorlinks=true, urlcolor=blue, citecolor=blue, linkcolor=blue]{hyperref}
\usepackage{amsfonts, amsmath, amssymb, amsthm}
\usepackage{thmtools}
\usepackage{fancyvrb}
\VerbatimFootnotes
\usepackage{enumerate}
\usepackage{tabularx}
\usepackage{adjustbox}
\usepackage{booktabs}
\usepackage{color, enumitem, fancyhdr, float, graphicx, subfig}
\usepackage[all]{xy}
\graphicspath{{./figures/}{../figures/}}
\usepackage{ifthen}
\usepackage{courier}
\usepackage{authblk} 
\usepackage{natbib}

\newtheorem{theorem}{Theorem}[section]

\newtheorem{assumption}{Assumption}[section]

\usepackage[a4paper,
            textheight=23cm,
            top=2cm, 
            bottom=2cm, 
            textwidth=15cm,
            hscale=0.8, 
            footskip=1.5cm,
            headsep=0.5cm]{geometry}
\usepackage{setspace}
\onehalfspacing

\newboolean{showSolutions}
\setboolean{showSolutions}{true}
\newcommand{\solutioncmd}[2]{{{#1}}{{#2}}}
\newcommand{\solution}[2]{\ifthenelse {\boolean{showSolutions}} {\solutioncmd{#1}}{\solutioncmd{#2}}}

\newcommand{\E}{{\mathbb{E}}}

\newcommand{\I}{{\mathbb{I}}}

\newcommand{\sY}{{\mathcal{Y}}}

\newcommand{\x}{\mathbf{x}}

\newcommand{\X}{\mathbf{X}}

\renewcommand{\P}{{\mathbb{P}}}
\newcommand{\ourmethod}{\texttt{CP4SBI}}
\newcommand{\cdfourmethod}{\texttt{CDF CP4SBI}}
\newcommand{\locartourmethod}{\texttt{LoCart CP4SBI}}

\newcommand{\bcdfourmethod}{\texttt{\textbf{CDF CP4SBI}}}
\newcommand{\blocartourmethod}{\texttt{\textbf{LoCart CP4SBI}}}

\title{CP4SBI: Local Conformal Calibration of Credible Sets in Simulation-Based Inference}
\author[1, 2, 3]{Luben M. C. Cabezas}
\author[1]{Vagner S. Santos}
\author[1]{Thiago R. Ramos}
\author[3]{Pedro L. C. Rodrigues}
\author[1]{Rafael Izbicki}
\affil[1]{Department of Statistics, Federal University of São Carlos}
\affil[2]{Institute of Mathematics and Computer Science, University of São Paulo}
\affil[3]{Univ.~Grenoble Alpes, Inria, CNRS, Grenoble INP, LJK}

\date{}
\begin{document}
\maketitle
\vspace{-4em}
\begin{center}
\textbf{Abstract}
\end{center}

\vspace{-1.5em} 

\begin{quote}
Current experimental scientists have been increasingly relying on simulation-based inference (SBI) to invert complex non-linear models with intractable likelihoods.
However, posterior approximations obtained with SBI are often miscalibrated, causing credible regions to undercover true parameters. We develop \ourmethod{}, a model-agnostic conformal calibration framework that constructs credible sets with local Bayesian coverage. Our two proposed variants, namely local calibration via regression trees 
and CDF-based calibration,
enable finite-sample local coverage guarantees for any scoring function, including HPD, symmetric, and quantile-based regions. Experiments on widely used SBI benchmarks demonstrate that our approach improves the quality of uncertainty quantification for neural posterior estimators using both normalizing flows and score-diffusion modeling.
\end{quote}


\vspace{1em}
\noindent\small\textbf{Keywords:} Simulation-based inference, Credible regions, Conformal prediction, Local coverage, Conditional coverage

\section{Introduction}
\label{sec:introduction}
In recent years, the machine learning research community has shown growing interest in tackling challenging open problems across a range of experimental sciences, including biology \citep{chicco2017ten, min2017deep}, astrophysics \citep{freeman2017unified, rodriguez2022application}, neuroscience \citep{lueckmann2017flexible}, and fluid dynamics \citep{usman2021machine, vinuesa2022enhancing}. A fundamental challenge common to all these domains—and the focus of this work—is the task of inferring the parameter values of statistical models that best explain observed data \citep{tarantola2005inverse, casella2024statistical}. These statistical models are typically formalized as \emph{stochastic simulators}, 
which generate synthetic data $\x \in \mathcal{X}$ given input \emph{parameters} $\theta \in \Theta$. In other words, $\theta$ represents a set of parameters in these scientific models that govern the data $\mathbf{x}$ (e.g., physical parametrizations, infection rates, or statistical means). Although it is easy to generate data from such simulators,
these models generally lack a tractable likelihood function, 
making classical Bayesian inference techniques for computing posterior distributions, such as Markov Chain Monte Carlo (MCMC), infeasible.


Simulation-based inference (SBI) \citep{cranmer2020frontier} is a powerful framework for likelihood-free Bayesian inference, allowing posterior estimation from synthetic data generated by simulators. Recent advances in deep generative models enable SBI methods to approximate posterior distributions using expressive neural density estimators. Despite notable empirical successes, SBI is still maturing and faces key challenges that hinder wider adoption. These include sensitivity to model misspecification~\citep{Wehenkel2025}, difficulties with hierarchical or weakly identified models~\citep{rodrigues2021hnpe}, and limited theoretical understanding of posterior consistency with finite data~\citep{Linhart2023}. A meta-analysis by \citealt{hermans2022crisis} highlights that these limitations often result in overconfident and miscalibrated posterior estimates.
In particular, credible regions of the form
\begin{align*}
C(\mathbf{x}_{\text{obs}}) = \{\theta : \widehat{p}(\theta \mid \x_{\text{obs}}) \geq K \},
\end{align*}
constructed from approximate posteriors $\widehat{p}(\theta \mid \x_{\text{obs}})$, where $\x_{\text{obs}}$ is the observed data, may fail to achieve their intended coverage levels. This issue, further explored in Section~\ref{sec:bayesian_credible}, undermines the reliability of SBI in downstream decision-making tasks~\citep{murphy2022probabilistic}.


To mitigate the issues of miscalibrated posterior distributions, recent work draws on the success of
Conformal prediction (CP) for prediction methods~\citep{shafer2008tutorial, angelopoulos2023conformal} to construct credible regions with finite-sample coverage guarantees in an SBI setting. Concretely, these papers apply conformal prediction to a calibration dataset $\mathcal{D}_{\text{cal}} = \{(\theta_i, \X_i)\}_{i=1}^B$, where each $(\theta, \X)$ is simulated from the prior and the model.
However, vanilla CP yields only  marginally calibrated regions \citep{patel2023variational, baragatti2024approximate}, meaning that
\begin{align}
\label{eq:marginal_coverage}
    \P(\theta \in C(\X)) = 1 - \alpha,
\end{align}
where the probability is taken jointly over random draws of the parameter $\theta$, the data $\X$, and a calibration dataset used to construct $C$. This property is often referred to as marginal coverage.

 This is less informative than \emph{conditional coverage} for a given observation $\x_\text{obs}$, defined as
\begin{align}
\label{eq:cond_coverage}
    \P(\theta \in C(\X)\mid\X = \x_{\text{obs}}) = 1 - \alpha,
\end{align}
a property that is central to Bayesian inference. Moreover, current CP methods require access to a closed-form approximation to the posterior $p(\theta\mid\x)$, which is not necessarily possible for all classes of generative models currently used for SBI, such as diffusion models~\citep{linhart2024diffusion} and flow matching~\citep{wildberger2023flow}. 

We address this gap by incorporating \emph{local}  CP techniques \citep{izbicki2022cd, cabezas2025regression, dheur2025multi} into the SBI framework. Our method, \ourmethod{}, supports a broad range of posterior approximators, including both density- and sample-based approaches. As illustrated in Figure~\ref{fig:illustration_locart}, \ourmethod{} yields more reliable, observation-specific credible regions, improving calibration across different $\x_{\text{obs}}$ and refining standard globally calibrated methods. It only requires a calibration dataset $\mathcal{D}_{\text{cal}} = \{(\theta_i, \x_i)\}_{i=1}^B$, where each $(\theta, \X)$ is either simulated from the prior and the model, or obtained by withholding a small portion (e.g., 20\%) of the training data used for the posterior approximator.

\begin{figure}[h]
    \centering
    \includegraphics[width=1\columnwidth]{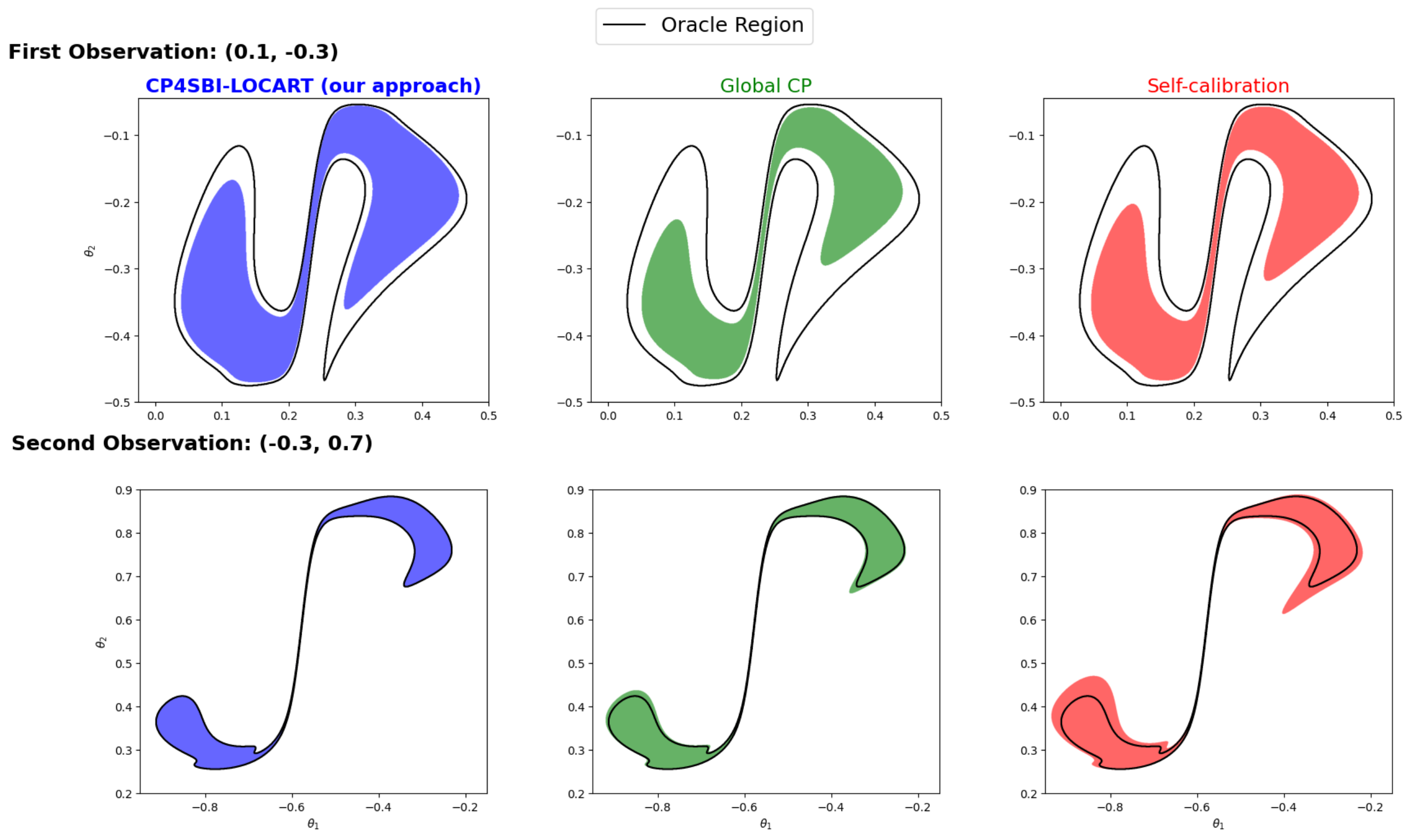}
    \caption{Comparison of highest posterior density credible regions on the \textit{Two Moons} benchmark for two distinct observations $\x_{\text{obs}}$. The Two Moons benchmark is a two-parameter task that exhibits bimodality and crescent shapes, for which we know the true posterior. Here, $\theta$ controls the location of the crescent-shaped data.  The proposed \locartourmethod{} method (blue) produces regions that more closely match the oracle regions (black) with correct coverage, compared to the global conformal (green) and self-calibrated (red) baselines. This highlights the benefits of local adaptivity in improving the quality of credible region estimates.}
    \label{fig:illustration_locart}
\end{figure}

\textbf{Paper Outline.} Section \ref{sec:related_works} reviews related work. Section \ref{sec:novelty} presents our  contributions. Background is shown in Section \ref{sec:background}, followed by methodology in Section \ref{sec:methodology}. Section \ref{sec:theory} presents theoretical results, and Section \ref{sec:experiments} shows experiments on SBI benchmarks. Section \ref{sec:conclusions} concludes the paper. Appendices \ref{appendix:algorithms} to \ref{appendix:exps} contain algorithms, proofs, and additional results and details.

\subsection{Relation to Other Work}
\label{sec:related_works}

\textbf{Approximations of posterior distributions.}  Various SBI strategies for posterior estimation have been developed over recent years. These include initial approaches like Approximate Bayesian Computation (ABC) \citep{marin2012approximate,sisson2018handbook}, which approximates the posterior for a specific observation $\x_{\text{obs}}$ via rejection sampling using a specified distance function and rejection sampling or other MCMC methods. Following this,  amortized methods started being used to estimate the posterior distribution and related quantities more efficiently \citep{cranmer2015approximating,gutmann2016bayesian,izbicki2019abc}. These methods include neural posterior estimation (NPE) \citep{papamakarios2016fast, lueckmann2017flexible, greenberg2019automatic, deistler2022truncated}, which directly approximates the posterior density $p(\theta \mid \x)$; neural likelihood estimation (NLE) \citep{papamakarios2019sequential, frazier2023bayesian}, which estimates the intractable likelihood function $p(\x \mid \theta)$; and  density ratio estimators \citep{izbicki2014high,hermans2020likelihood, durkan2020contrastive,dalmasso2020confidence,DalmassoLF2I}, which estimate likelihood ratios such as $p(\x \mid \theta)/p(\x)$. Additionally, a subsequent development in the field has been the adaptation of more implicit generative models, such as flow-matching and diffusion models, to SBI \citep{wildberger2023flow, geffner2023compositional, linhart2024diffusion}. These advanced techniques aim to faithfully generate samples directly from the posterior $p(\theta\mid\x)$ without requiring the estimation of a closed-form posterior density.
Our approach uses such methods, but provides means to better calibrate credible sets derived from them.
\vspace{2mm}


\textbf{Calibration of approximators and recalibration techniques.} Despite the breadth of SBI's recent research and its impact on various scientific fields, \citealt{hermans2022crisis} has revealed a prevalent challenge to all of its variants: they can suffer from statistical miscalibration and produce overconfident estimates in practical settings. Consequently, a significant body of recent work \citep{hermans2022crisis, dey2022conditionally, bon2022bayesian, delaunoy2022towards, chung2024sampling} has focused on improving posterior density calibration, with two main approaches:
(1)  Post-hoc procedures that recalibrate existing estimates, e.g., using local Probability Integral Transform (PIT) diagnostics \citep{zhao2021diagnostics, dey2022conditionally}, or adjusting samples and probability estimates with Highest Density Region (HDR) corrections \citep{chung2024sampling}; and 
    (2) Pre-fitting procedures that integrate calibration during the estimation process, e.g., ensembling posterior estimations \citep{hermans2022crisis} or regularizing estimator formulations \citep{delaunoy2022towards, delaunoy2023balancing}.     
The main limitation of methods from both categories is that they do not offer explicit finite-sample guarantees for the validity of credible regions.

\vspace{2mm}
\textbf{Conformal Prediction applied to approximate posteriors.}  Standard Conformal Prediction (CP) techniques, when applied to SBI, primarily provide post-hoc marginal coverage guarantees. Existing work in this direction includes \citealt{baragatti2024approximate}, which adapts CP for Approximate Bayesian Computation methods, and \citealt{patel2023variational}, which explores its use in variational inference —a setting, like SBI, where one also works with estimates of the posterior distribution.  Although these methods are effective in improving marginal coverage, they do not provide locally or approximately conditionally valid credible regions in general settings. Moreover, \citealt{patel2023variational} requires a closed-form approximation of the posterior, which is typically unavailable in the generative models used for SBI, where only sample-based estimates are provided.

\vspace{2mm}

\textbf{Frequentist confidence sets for SBI.} 
An emerging line of research aims to construct confidence sets with frequentist coverage in SBI settings, i.e., ensuring that $\P(\theta \in C(\X)\mid \theta) = 1 - \alpha$ for all $\theta \in \Theta$ as opposed to Equation~\eqref{eq:cond_coverage}. This includes methods based on Monte Carlo sampling \citep{cranmer2015approximating}, quantile regression \citep{dalmasso2020confidence, DalmassoLF2I, masserano2023simulator,carzon2025trustworthy,carzon2025focusing}, and conformal prediction \citep{cabezas2024distribution}. 
Our work does not target frequentist coverage but instead focuses on improving Bayesian posterior calibration.

\subsection{Novelty}
\label{sec:novelty}

We introduce \ourmethod{}, a framework that integrates conformal prediction techniques into the simulation-based inference pipeline to produce calibrated credible regions. Our main contributions are:

\begin{itemize}
    \item \textbf{Guaranteed Marginal Coverage:} Unlike standard credible sets from approximate posteriors which often suffer from miscalibration \citep{hermans2022crisis}, \ourmethod{} provides non-asymptotic guaranteed marginal coverage (Equation \ref{eq:marginal_coverage}) by reinterpreting Bayesian scores as nonconformity scores.   
    
    \item \textbf{Enhanced Local and Conditional Adaptivity:} We move beyond simple marginal guarantees, achieving stronger coverage criteria:
        \begin{itemize}
            \item \textbf{\bcdfourmethod{}:} Achieves asymptotic conditional coverage as the estimate $\widehat{p}(\theta\mid\x)$ gets closer to the true posterior distribution.  This implies that  $\P(\theta \in C(\X)\mid\X = \x_{\text{obs}})$ typically gets closer to $ 1 - \alpha $ as the number of simulated training samples increases. This is achieved by efficiently estimating the conditional CDF of scores, reusing the already-trained posterior approximation.
            \item \textbf{\blocartourmethod{}:} Provides local coverage by partitioning the data space with a regression tree, adapting credible region sizes based on inference difficulty. That is, \blocartourmethod\ achieves $ \P(\theta \in C(\X)\mid\X \in A) = 1 - \alpha $, where $A$ is a subset of $\mathcal{X}$ of datasets close to $ \x_{\text{obs}}$ according to a data-driven metric. Moreover,
     it  offers asymptotic conditional coverage in the sense that  $\P(\theta \in C(\X)\mid\X = \x_{\text{obs}})$  gets closer to $1-\alpha$ as the number of \emph{calibration} samples $B$ increases.
        \end{itemize}
    \item \textbf{Generality and Flexibility:} \ourmethod{} is a model-agnostic framework that can be applied on top of any posterior approximator, including density-based methods (e.g., NPE) and sample-based generative models (e.g., diffusion models, flow matching). Furthermore, it is compatible with various scoring functions, allowing for the construction of calibrated Highest Posterior Density regions, symmetric regions, or other custom credible sets. It also accommodates nuisance parameters and parameter transformations. This flexibility makes it a broadly applicable tool for enhancing the reliability of  SBI methods.
\end{itemize}

\section{Background}
\label{sec:background}
\subsection{Conformal Prediction}
\label{sec:conformal}

Conformal methods have recently emerged as a powerful framework for constructing prediction regions under minimal assumptions \citep{vovk2005algorithmic, shafer2008tutorial, angelopoulos2023conformal,Izbicki2025}. Given exchangeable data $\{(Y_i, \X_i)\}_{i=1}^{m+1}$, these methods construct a prediction set $C(\cdot)$ using the first $m$ pairs such that
$
\P\big(Y_{m+1} \in C(\X_{m+1})\big) \geq 1 - \alpha ,
$ where the probability is taken with respect to the joint distribution of all $m+1$ samples.

A standard split-conformal method begins by defining a nonconformity score $s: \mathcal{Y} \times \mathcal{X}\to \mathbb{R}$, which quantifies how well a candidate output $y \in \sY$ conforms to the input $\x \in \mathcal{X}$ given a fitted regression model. 
In regression problems, a common choice of score is the absolute residual
$
s(y; \x) = \big|y - \widehat{\E}[Y \mid \x]\big|,
$
where $\widehat{\E}[Y \mid \x]$ is an estimate of the regression function $\E[Y \mid \x]$ fitted on a subset of the labeled data reserved for training.
The prediction region then takes the form
$
C(\x) = \{y \in \mathcal{Y} : s(y; \x) \leq t_{1-\alpha}\},
$
where $t_{1-\alpha}$ is the $(1 + 1/n)(1 - \alpha)$-quantile of the conformity scores $s(Y_i; \X_i)$ evaluated on a calibration set of size $n$ disjoint from the training data used to construct the nonconformity score $s$.

Conformal methods guarantee marginal coverage \citep{papadopoulos2008normalized, Vovk2012, Lei2014, valle2023quantifying}, but generally do not ensure conditional coverage. Without strong assumptions on the data-generating distribution, achieving exact conditional coverage either requires trivial (often unbounded) prediction sets or results in coverage falling below the target level for some covariate values \citep{Lei2014, Barber2021}. 
To address this, several recent methods aim to achieve asymptotic conditional coverage as $m \to \infty$. One approach is to construct conformity scores whose conditional distribution given $\X$ is approximately independent of $\X$. Examples include conformalized quantile regression \citep{Romano2019}, distributional conformal prediction \citep{chernozhukov2019distributional}, Dist-split \citep{izbicki2019distribution}, HPD-split \citep{izbicki2022cd}, EPICSCORE \citep{cabezasepistemic}, and \citealt{plassier2025rectifying}.
In this work, we are particularly interested in the CDF-conformal score introduced by \citet[Eq.~14]{dheur2025multi}. Given a nonconformity score $s(y;\x)$ and an estimate $\widehat{F}(\cdot \mid \x)$ of its conditional cumulative distribution function, this approach defines the transformed score.
\[
s'(y; \x) = \widehat{F}(s(y; \x) \mid \x).
\]
This transformation improves conditional coverage by ensuring that the modified score $s'$ is (close to) uniformly distributed, leading to asymptotically valid conditional prediction sets, while keeping marginal validity.

An alternative strategy to achieve asymptotic conditional coverage is to define a finite partition $\mathcal{A} = \{ A_1, \ldots, A_K \}$ of the feature space $\mathcal{X}$, and construct prediction regions locally within each partition element. Concretely, let $T: \mathcal{X} \to \mathcal{A}$ be the function that maps each feature vector $\x$ to its corresponding region in $\mathcal{A}$. Then, divide the calibration set into subsets
$I_j = \{ i  : T(\X_i) = A_j \}, \ j = 1, \ldots, K.$ 
Finally, within each region $A_j$, compute the conformal quantile $t_{j,1-\alpha}$ as the $(1 + 1/n_j)(1 - \alpha)$ empirical quantile of scores $s_i$ for $i \in I_j$, where $n_j = |I_j|$. The local prediction region is then defined as
$$
C_{\text{local}}(\x) = \left\{ y \in \mathcal{Y} : s(y; \x) \leq t_{j,1-\alpha} \right\}, \quad \text{for } \x \in A_j.
\label{eq:splitinterval-local}
$$
This procedure guarantees
$
\P\left(Y_{n+1} \in C_{\text{local}}(\X_{n+1}) \mid \X_{n+1} \in A_j \right) \geq 1 - \alpha.
$
As the partition becomes finer (i.e., as $A_j$ shrinks), the method approaches conditional validity. Several strategies for defining such partitions have been proposed \citep{Vovk2012, Lei2014, bostrom2020mondrian, bostrom2021mondrian, Barber2021}. We use \texttt{LoCart} \citep{cabezas2025regression}, which constructs a regression tree to partition the feature space by predicting the conformity score $s(y; \x)$ from $\x$. The resulting partition approximates the coarsest one where the conditional distribution of $s \mid \x$ is constant \citep{meinshausen2006quantile, cabezas2025regression}, enabling local conformalization to closely match conditional coverage with a minimal number of data-efficient regions.
 
\subsection{Bayesian Credible Sets}

\label{sec:bayesian_credible}

Bayesian credible sets typically take the form
\begin{equation}
\label{eq:def-credible-set}
C(\mathbf{x}) = \{\theta : s(\theta; \mathbf{x}) \leq t_{1-\alpha}(\mathbf{x})\},
\end{equation}
where $s(\theta; \mathbf{x})$ is a scoring function computed using the posterior distribution, and the threshold $t_{1-\alpha}(\mathbf{x})$ is chosen to satisfy the conditional coverage condition:
\begin{equation*}
\mathbb{P}\left(\theta \in C(\mathbf{x}) \mid \mathbf{x} \right) =
\int \mathbb{I}\left( s(\theta; \mathbf{x}) \leq t_{1-\alpha}(\mathbf{x}) \right) p(\theta \mid \mathbf{x})\, d\theta = 1 - \alpha.
\end{equation*}
In other words, the posterior probability that the credible set contains the parameter value must be $1-\alpha$, where the miscoverage level $\alpha$ is defined beforehand.

Different choices of the scoring function $s$ lead to different types of credible sets. For example (see Figure \ref{fig:cred_sets} for an illustration):
\begin{itemize}
  \item \textbf{(HPD Regions)} If $s(\theta; \mathbf{x}) = -p(\theta \mid \mathbf{x})$, the resulting set corresponds to the highest posterior density (HPD) region;
  \item \textbf{(Symmetric Regions)} In the case $\theta \in \mathbb{R}$ and if $s(\theta; \mathbf{x}) = \dfrac{|\theta - \mathbb{E}[\theta \mid \mathbf{x}]|}{\sqrt{\mathrm{Var}[\theta \mid \mathbf{x}]}}$, the resulting set is a central region based on the posterior mean and variance (see \citealt{masserano2023simulator} for multivariate extensions);
  \item  \textbf{(Quantile-based Regions)}  In the case $\theta \in \mathbb{R}$ and if $s(\theta; \mathbf{x}) = \max\left\{ q_{\alpha_1}(\mathbf{x}) - \theta,\, \theta - q_{\alpha_2}(\mathbf{x}) \right\}$, where $q_\alpha(\mathbf{x})$ denotes the $\alpha$-quantile of the distribution $\theta \mid \mathbf{x}$ and $\alpha_2-\alpha_1=1-\alpha$, the resulting set is the quantile-based interval $(q_{\alpha_1}(\mathbf{x}), q_{\alpha_2}(\mathbf{x}))$. In this case, the threshold satisfies $t_{1-\alpha}(\mathbf{x}) = 0$ by construction.
\end{itemize}

\begin{figure}[ht]
    \centering
    \includegraphics[width=1\linewidth]{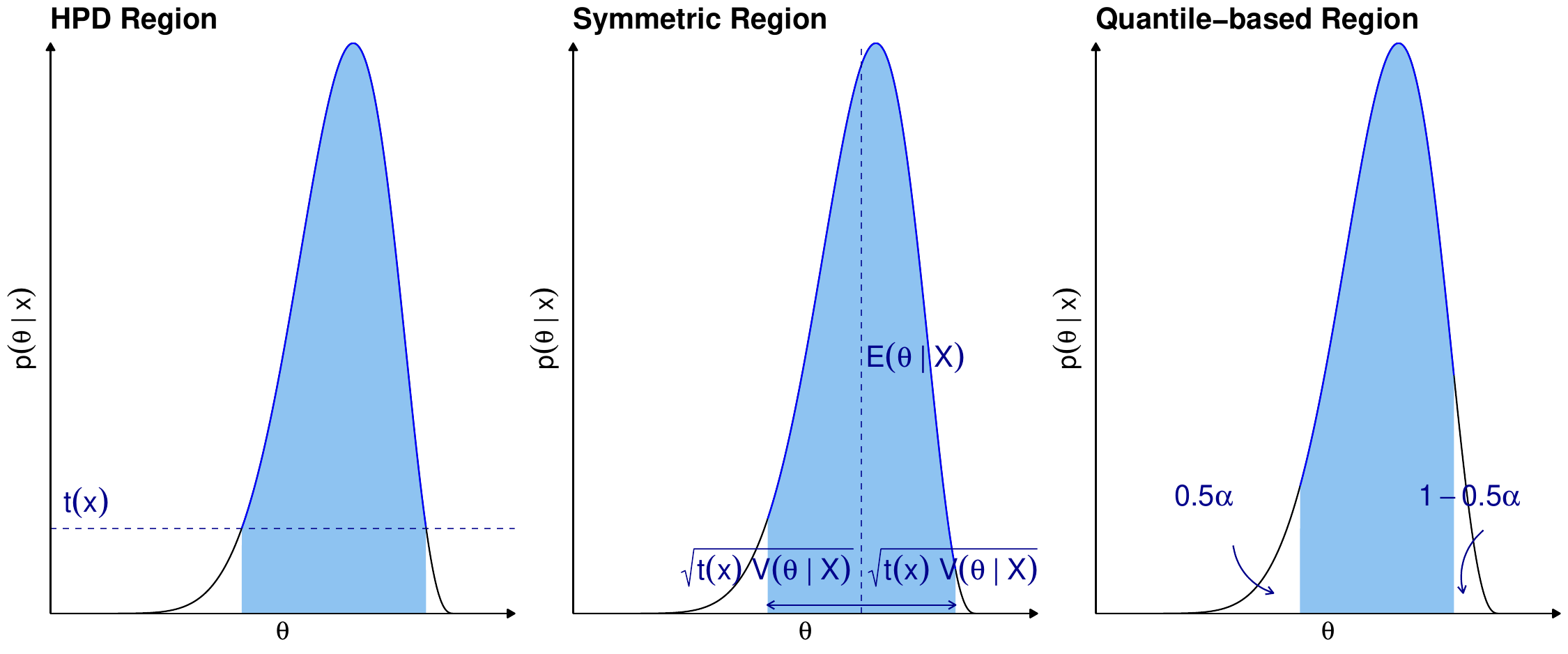}
    \caption{Credible regions for each distinct scoring function $s$.}
    \label{fig:cred_sets}
\end{figure}

When the true posterior $p(\theta \mid \mathbf{x})$ is available—either in closed form or through sampling—it is conceptually straightforward, albeit potentially computationally demanding, to compute a threshold $t_{1-\alpha}(\mathbf{x})$ that guarantees valid coverage as defined by the equation above.

Now suppose that an approximation $\widehat{p}(\theta \mid \mathbf{x})$ to the true posterior is used, for instance via neural posterior estimation, variational inference, or another method. 
The scoring function $s$ can then be derived from the approximate version of $p$. For instance, the score corresponding to HPD can be $-\widehat p(\theta\mid\x)$. However, in this setting the threshold $t_{1-\alpha}(\x)$  also needs to be estimated. A common approach is to choose its estimate $\widehat{t}_{1-\alpha}(\x)$ so that
\begin{align}
\label{eq:naive_approach}
\widehat{\mathbb{P}}\left(\theta \in \widehat{C}(\mathbf{x}) \mid \mathbf{x} \right) :=
\int \mathbb{I}\left( s(\theta; \mathbf{x}) \leq \widehat{t}_{1-\alpha}(\mathbf{x}) \right) \widehat{p}(\theta \mid \mathbf{x})\, d\theta = 1 - \alpha.
\end{align}

That is, $\widehat{t}_{1-\alpha}(\mathbf{x})$ is set so that the resulting credible set contains $\theta$ with posterior probability $1 - \alpha$ under the approximate $\widehat{p}$. However, if $\widehat{p}$ poorly approximates the true posterior $p$, the resulting coverage can deviate substantially from the nominal level. \ourmethod \ addresses this issue.

\section{Our approach: \ \ourmethod}
\label{sec:methodology}

The key insight of our approach is to reinterpret $s(\theta; \x)$ --- defined in Equation~\eqref{eq:def-credible-set} --- as a conformity score in the conformal inference framework. This allows us to construct credible sets with improved coverage properties, even when the posterior $p(\theta \mid \x)$ is poorly approximated.

Let $\{(\theta_1, \X_1), \ldots, (\theta_B, \X_B)\}$ be a calibration dataset drawn independently from the joint distribution $\pi(\theta)p(\x\mid\theta)$ and not used for estimating the posterior $\widehat{p}(\theta\mid\x)$. 
Concretely, each  $(\theta,\X)$ is drawn by first sampling $\theta$ from the prior distribution $\pi(\theta)$ and then $\X \mid\theta$ from the statistical model $p(\x\mid\theta)$ (which is the forward simulator in SBI).
As we are adopting a standard Bayesian framework, the prior $\pi(\theta)$ is assumed to be known and fixed during both training and calibration, and thus the guarantees we obtain are stated with respect to this prior. In practice, the choice of $\pi(\theta)$ should reflect domain knowledge, or the use of weakly informative priors when subjective prior information is limited.

A first naive conformal approach defines the prediction set as
$C(\x) = \{\theta \in \Theta : s(\theta; \x) \leq t_{1-\alpha}\},$
where $t_{1-\alpha}$ is the $(1+1/B)(1 - \alpha)$-quantile of the scores $\{s(\theta_b; \x_b)\}_{b=1}^B$, as detailed in Algorithm \ref{alg:vanilla_cp}. This ensures marginal coverage:
$\P\left(\theta \in C(\X)\right) \geq 1 - \alpha$,
but offers no conditional coverage guarantees, which are central in Bayesian settings. Even as $B \to \infty$, $t$ converges to the $(1 - \alpha)$-quantile of the marginal distribution of $s(\theta; \X)$, not the conditional distribution of $s(\theta; \x)$ for fixed $\x$. To address this, we introduce conformal methods designed for asymptotic conditional validity. We present two implementations of \ourmethod{}, though other strategies from Section~\ref{sec:conformal} also apply:


\vspace{2mm}
\noindent
\blocartourmethod:
We adopt the strategy of inducing a partition of the covariate space $\mathcal{X}$ by fitting a regression tree to predict $s(\theta; \x)$ from $\x$. Conformal calibration is then applied within the tree leaf containing $\x$, using only calibration scores from that leaf. This technique, introduced by \citet{cabezas2025regression} and known as \texttt{LoCart}, is motivated by the fact that the resulting partition (i.e., the terminal leaves) groups observations $\mathbf{x}$ that share an approximately identical conditional score distribution, $s(\theta; \mathbf{x})|\mathbf{x}$. As a result, the average coverage probability $\mathbb{P}(\theta \in {C}(\x) \mid \x \in A)$ closely approximates the desired local conditional coverage $\mathbb{P}(\theta \in {C}(\x) \mid \X = \x)$, since the partition captures local behavior. The theoretical properties and regularity assumptions underpinning this are detailed in Appendix~\ref{appendix:proofs}.

In our implementation of \texttt{LoCart}, we adopt an augmented version that enriches the feature space by including an estimate of the conditional variance of the conformity score, $\mathbb{V}[s(\theta; \X) \mid \X]$, as an additional feature. This augmentation, along with standard pruning techniques (detailed in Appendix~\ref{appendix:comp_details}) to ensure robust and well-populated partitions, allows for more informative partitions and improves the efficiency of the local procedure. The method is summarized in Algorithm \ref{alg:cp4sbi_locart} in Appendix \ref{appendix:algorithms} and illustrated in Figure~\ref{fig:locart_CP4SBI}.

A key implementation detail concerns the calibration data. Although guaranteeing marginal coverage theoretically requires splitting $\mathcal{D}_{\text{cal}}$ (for partitioning and cutoff computation), we omit this step in practice. We use the entire calibration set for both tasks, as this has minimal empirical impact on marginal coverage and improves practical performance by maximizing the sample size available for local quantile estimation.

\begin{figure}[h]
    \centering
    \includegraphics[width=0.9\columnwidth]{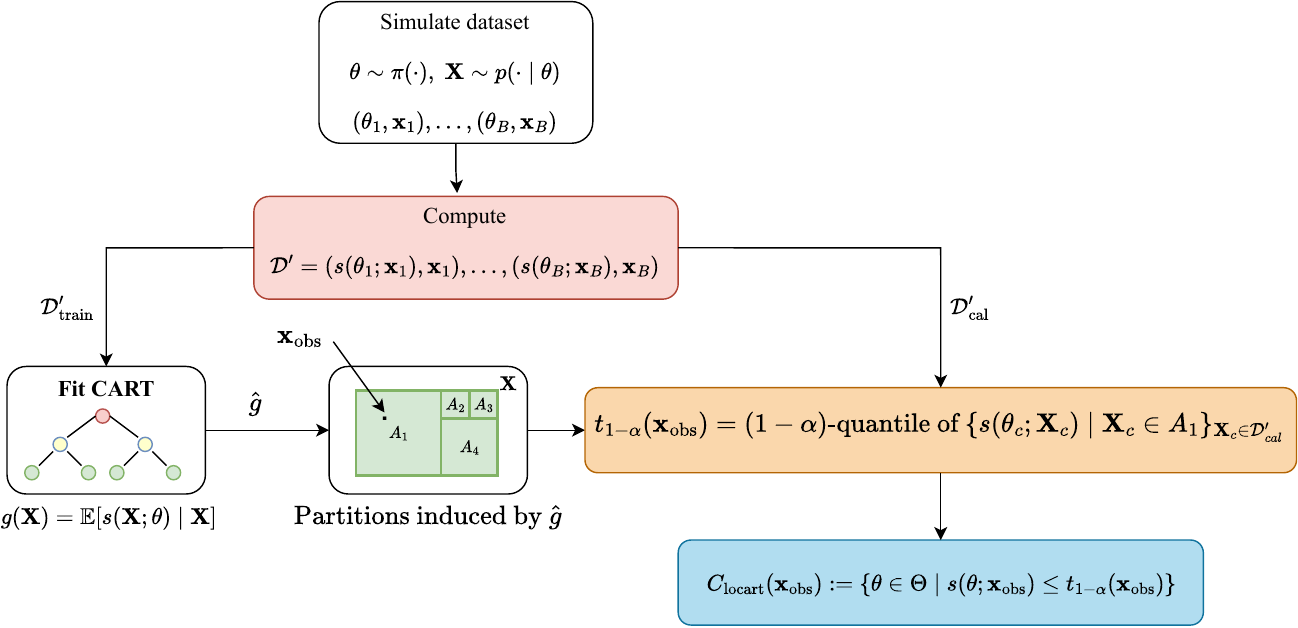}
    \caption{The \locartourmethod{} process begins with a simulated calibration dataset (top-center). From this, we compute conformity scores to build a new dataset $\mathcal{D}'$ (red), which is then split into partition and cutoff sets. A regression tree is trained on the partition set to predict scores $s$ from features $\mathbf{X}$ (CART panel), inducing a partition of the feature space (partition panel). For a new observation $\mathbf{x}_{\text{obs}}$, we find its corresponding region and compute a local cutoff as the $(1-\alpha)$ quantile of scores from the cutoff set that fall into that same region (orange). This local cutoff defines the final credible region (blue).}
    \label{fig:locart_CP4SBI}
\end{figure}

\vspace{2mm}
\noindent
\bcdfourmethod: 
This variant improves conditional coverage by transforming the score $s(\theta; \x)$ via an estimate $\widehat{F}(\cdot \mid \x)$ of its conditional cumulative distribution function, as proposed by \citealt{dheur2025multi}:
$
s'(\theta; \x) = \widehat{F}(s(\theta; \X) \mid \X = \x).
$
 In its original prediction setting, CDF-conformal needs the conditional distribution of $s(Y;\X) \mid \x$ to be approximated from scratch, which requires estimating a conditional density (via e.g. normalizing flows or a kernel density estimator) on top of the original conformal score. This results in high computational costs and a separate holdout set dedicated to learning such a distribution. Note, however, that in our simulator-based inference setting, an estimate of the posterior $\widehat{p}(\theta \mid \x)$ is \emph{already available} and we can use it to obtain $\widehat{F}(s \mid \x)$ with minimal additional computational cost.
In practice, we approximate $\widehat{F}(s(\theta;\x) \mid \x)$ using a Monte Carlo estimate based on posterior draws $\{\theta_j\}_{j=1}^M \sim \widehat{p}(\theta \mid \x)$, i.e., samples generated from the estimated posterior at $\x$. The approximation is given by the empirical CDF:
\begin{align}
\label{eq:ECDF_eq}
\widehat{F}_M(s(\theta;\x) \mid \x) = \frac{1}{M} \sum_{j=1}^M \mathbb{I}\big(s(\theta_j;\x) \leq s(\theta;\x)\big).
\end{align}

This corresponds to using the ECDF method from \citep{dheur2025multi}. We emphasize that these $M$ posterior draws are from the \textbf{already-trained} posterior estimator, \textbf{not} the expensive \textbf{simulator}. This step is therefore computationally inexpensive, yet leverages the approximator's information for more adaptive credible sets. \cdfourmethod{} is summarized in Algorithm \ref{alg:cp4sbi_cdf} in Appendix \ref{appendix:algorithms} and illustrated in Figure~\ref{fig:cdf_CP4SBI}. 

\begin{figure}[h]
    \centering
    \includegraphics[width=0.9\columnwidth]{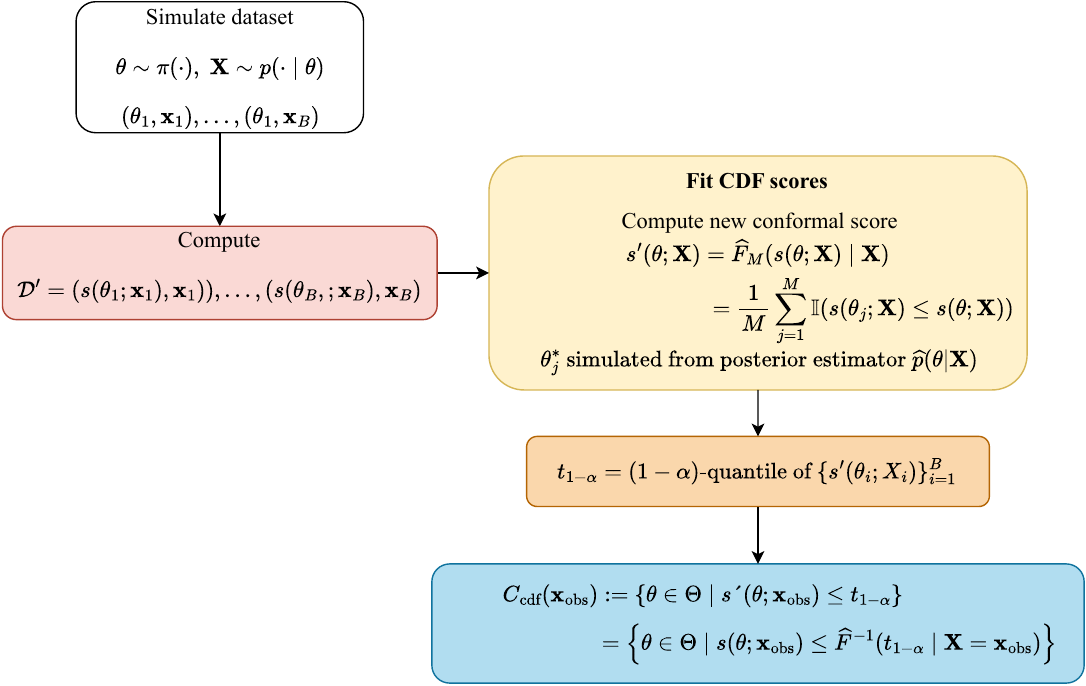}
    \caption{The \cdfourmethod{} process begins with a simulated calibration dataset (top-left), from which we compute nonconformity scores to build the dataset $\mathcal{D}'$ (red). A posterior estimator is then used to estimate the conditional CDF of the scores, transforming each score into its CDF value $s'(\X; \theta)$ (yellow). A single, global cutoff is computed from these newly transformed scores (orange). This global threshold defines the credible region for $\x_{\text{obs}}$, which translates back to a data-dependent, local cutoff in the original score space.}
    \label{fig:cdf_CP4SBI}
\end{figure}

\vspace{2mm}
\noindent
\textbf{Nuisance parameters and transformations of the parameter space.}
In many applications, one may only need credible sets for a subset of $\theta$ or for a transformation $\phi = g(\theta)$. Our method naturally accommodates these cases. Specifically, we approximate the posterior $p(\phi \mid \mathbf{x})$ using the same techniques as for $p(\theta \mid \mathbf{x})$, then compute scores for $\phi$ as described in Section~\ref{sec:bayesian_credible}. To calibrate the cutoff $t_{1-\alpha}(\mathbf{x})$, we use the transformed calibration set ${(\phi_1, \mathbf{X}_1), \ldots, (\phi_B, \mathbf{X}_B)}$ with $\phi_i = g(\theta_i)$. Figure~\ref{fig:credible_regions_comparison_2} shows our method in this setting, reducing a 10-dimensional problem (Gaussian Linear Uniform \citep{lueckmann2021benchmarking}) to the first two dimensions (details on the illustration in Appendix \ref{appendix:illustration_details}). Our approach yields regions close to the oracle with accurate coverage, while other methods tend to be overconfident.
\begin{figure}[h]
    \centering
    \includegraphics[width=1.0\columnwidth]{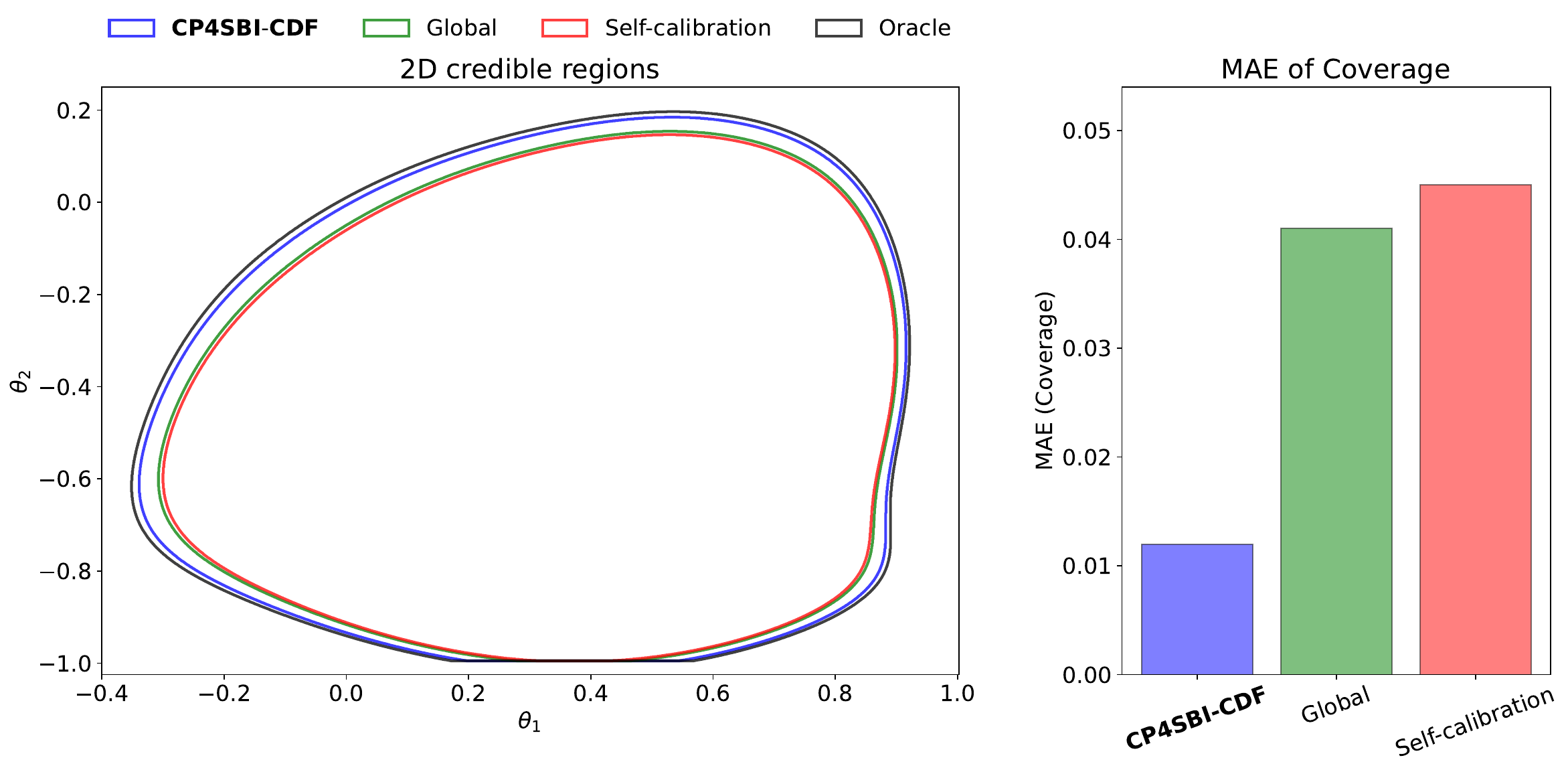}
    \caption{For a fixed observation $\x_{\text{obs}}$, we compare credible regions for the first two parameters of the 10-dimensional \textit{Gaussian Linear Uniform} benchmark. In this task, the simulator draws $\mathbf{x}$ from a Gaussian distribution with a fixed covariance and mean $\theta$, using a uniform prior over $\theta$. The results show our \cdfourmethod{} (blue) produces a region closer to the oracle (black) and exhibits a smaller deviation from nominal coverage.}
    \label{fig:credible_regions_comparison_2}
\end{figure}

\vspace{2mm}
\noindent
\textbf{Credible sets for continuous-flow generative models.}
 Many recent posterior estimation methods do not yield a closed-form expression for $\widehat{p}(\theta \mid \mathbf{x})$, or evaluating it is too costly. Instead, they provide independent samples $\theta_1, \ldots, \theta_L$ from $\widehat{p}(\theta \mid \mathbf{x})$ for each fixed $\mathbf{x}$—e.g., score-diffusion~\citep{linhart2024diffusion} and flow-matching models~\citep{wildberger2023flow}.
\ourmethod{} remains applicable in this setting. Given a scoring function $s(\theta;\mathbf{x})$, both \locartourmethod{} and \cdfourmethod{} can be used directly, as they only require a posterior sampler (see Algorithms \ref{alg:cp4sbi_locart} and \ref{alg:cp4sbi_cdf} in Appendix \ref{appendix:algorithms}). However, the standard scores from Section~\ref{sec:bayesian_credible} assume access to an explicit density and thus cannot be used as-is. The approximations below allow their use with posterior samples:
\begin{itemize}
\item The HPD score can be approximated as 
$s(\theta;\mathbf{x}) \propto - \sum_{l=1}^L K(\theta, \theta_l),$
where $K$ is a smoothing kernel, which corresponds to applying a kernel density estimator to the posterior samples.

\item The symmetric-region score can be approximated by
$s(\theta;\mathbf{x}) = \widehat{\mathbb{V}}^{-1/2}[\theta \mid \mathbf{x}] \cdot |\theta - \widehat{\mathbb{E}}[\theta \mid \mathbf{x}]|,$
where $\widehat{\mathbb{E}}$ and $\widehat{\mathbb{V}}$ denote the empirical mean and variance of the samples $\{\theta_1, \ldots, \theta_L\}$.

\item The quantile-based score can be approximated via
$s(\theta;\mathbf{x}) = \max\left\{ \widehat{q}_{\alpha_1}(\mathbf{x}) - \theta,\, \theta - \widehat{q}_{\alpha_2}(\mathbf{x}) \right\},$
where $\widehat{q}_{\alpha_1}(\mathbf{x})$ and $\widehat{q}_{\alpha_2}(\mathbf{x})$ are the empirical quantiles of the posterior samples.
\end{itemize}
Other scores from the multivariate conformal prediction literature can also be used in this framework, such as C-PCP \citep{wang2023probabilistic,dheur2025multi} and CP$^2$-PCP \citep{plassier2024conditionally}. In this work, we employ the Kernel Approximation to derive the HPD regions for these generative models. Algorithm \ref{alg:KDE_score} in Appendix \ref{appendix:algorithms} details the procedure for deriving these scores, and Figure \ref{fig:illustration_diffusion} illustrates the regions obtained by using our method in this setting on the Gaussian Mixture task \citep{lueckmann2021benchmarking} (details on the illustration can be found on Appendix \ref{appendix:illustration_details}). \ourmethod{} better approximates the oracle region and achieves coverage closer to the nominal rate.
\begin{figure}[h]
    \centering
    \includegraphics[width=1.0\linewidth]{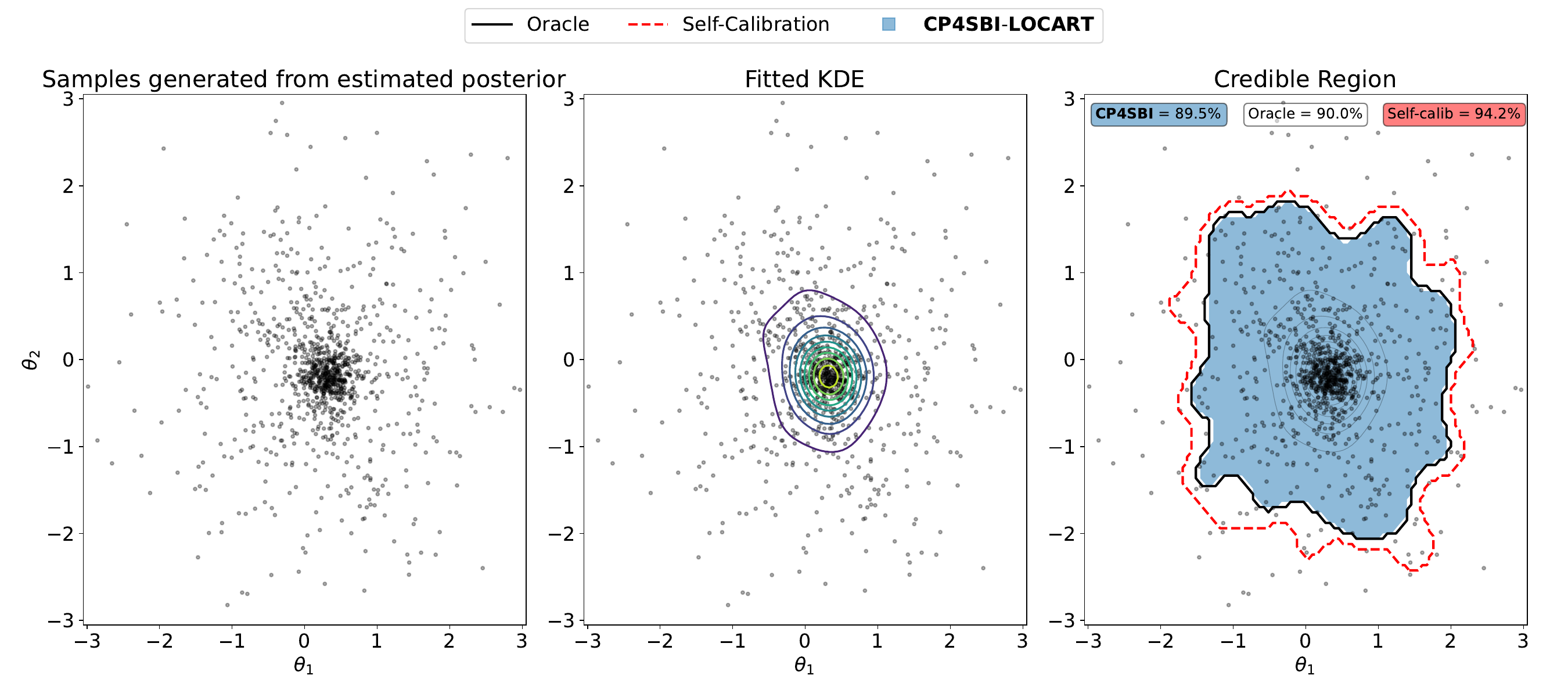}
    \caption{HPD region approximation for a continuous-flow generative model on the Gaussian Mixture benchmark, evaluated at a fixed $\mathbf{x}_{\text{obs}}$. The simulator draws $\mathbf{x}$ from a 2D Gaussian mixture with disparate covariances, where $\theta$ (uniform prior) defines the mean for both components. Our method achieves closer alignment to the oracle region and better coverage than self-calibration.}
    \label{fig:illustration_diffusion}
\end{figure}


\section{Theoretical guarantees}
\label{sec:theory}



We now present theoretical guarantees for the methods in Section~\ref{sec:methodology}. 
From this point on, we assume a disjoint data split: a training set $\{(\theta_i, \X_i)\}_{i=1}^K$ 
used to estimate the posterior $\widehat{p}(\theta \mid \x)$, and a calibration set 
$\{(\theta_i, \X_i)\}_{i=1}^B$ used to calibrate the conformal methods. 
All probabilities below are taken jointly over the randomness of the test pair $(\theta, \X)$ 
and of the calibration dataset $\mathcal{D}_{\text{cal}}$. 
Proofs are given in Appendix~\ref{appendix:proofs}.

\subsection{\locartourmethod}

The coverage guarantee of \locartourmethod{} follows from applying standard conformal prediction separately within each region $A_j$ of the partition it defines. Once the partition is fixed, the calibration scores \({s}(\theta; \x)\) remain exchangeable within each region, which allows us to apply conformal calibration independently in each subset. As a result, the quantile \({t}_{1-\alpha}(\x)\), computed using calibration points in \(A_j\), yields valid marginal coverage conditional on \(\x \in A_j\). 

\begin{theorem}[\locartourmethod{} local coverage]
\label{thm:locart_local_marginal}
Suppose the calibration pairs $\{(\theta_i, \X_i)\}_{i=1}^B$ and the test pair $(\theta,\X)$ are exchangeable.  
Let $\{A_j\}_{j=1}^J$ be the partition of $\mathcal{X}$ produced by the \locartourmethod{}.  
Denote by $A_j$ the cell of the partition containing $\X$, and let $t_{1-\alpha}(\X)$ be the $(1-\alpha)$ quantile of the conformity scores $s(\theta;\x)$ computed from the calibration pairs with $\X_i \in A_j$.  
Define the conformal set
\[
    C_{\text{locart}}(\x) := \bigl\{ \theta \in \Theta : s(\theta;\x) \leq t_{1-\alpha}(\x) \bigr\}.
\]
Then $C_{\text{locart}}(\x)$ satisfies both local and marginal coverage:
\[
    \mathbb{P}\!\big(\theta \in C_{\text{locart}}(\X) \,\big|\, \X \in A_j\big) \;\geq\; 1 - \alpha,
    \qquad
    \mathbb{P}\!\big(\theta \in C_{\text{locart}}(\X)\big) \;\geq\; 1 - \alpha.
\]
\end{theorem}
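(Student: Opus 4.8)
The plan is to treat this as a Mondrian-type (partition-conditional) split-conformal guarantee, so that the proof reduces to applying the standard exchangeability quantile lemma inside each cell $A_j$ and then averaging over cells. Throughout I would regard the partition $\{A_j\}_{j=1}^J$ — equivalently the assignment map $T$ — as fixed, since under the data split assumed at the start of Section~\ref{sec:theory} the regression tree is fit on the training portion and is therefore independent of the calibration pairs $\{(\theta_i,\X_i)\}_{i=1}^B$ and the test pair $(\theta,\X)$. Conditioning on this partition preserves the assumed exchangeability of the $B+1$ pairs.

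First I would fix a cell $A_j$ and restrict attention to the subcollection of pairs whose feature lands in it. Let $I_j = \{\, i \le B : \X_i \in A_j \,\}$, $N_j = |I_j|$, and work on the event $\{\X \in A_j\}$ in which the test pair also belongs to the cell. The central step is to establish \emph{conditional exchangeability}: because the $B+1$ pairs are exchangeable and $T$ is a fixed function of the feature alone, conditioning on the unordered set of indices that map into $A_j$ leaves the pairs carrying those indices exchangeable among themselves. Consequently their conformity scores $\{\, s(\theta_i;\X_i) : i \in I_j \,\} \cup \{\, s(\theta;\X) \,\}$ are exchangeable, and in particular the rank of the test score among these $N_j+1$ values is uniform.

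Next I would invoke the usual quantile lemma. With $t_{1-\alpha}(\X)$ taken as the $\lceil (1-\alpha)(N_j+1)\rceil$-th smallest of the $N_j$ calibration scores in the cell, i.e.\ the $(1+1/N_j)(1-\alpha)$ empirical quantile described in Section~\ref{sec:conformal}, uniformity of the rank yields
\[
\P\bigl(s(\theta;\X) \le t_{1-\alpha}(\X) \,\big|\, \X \in A_j\bigr) \ge 1-\alpha,
\]
which is exactly $\P\bigl(\theta \in C_{\text{locart}}(\X) \mid \X \in A_j\bigr) \ge 1-\alpha$. Technically I would first condition additionally on $N_j$ (and on the realized index set) in order to apply the finite-sample lemma, then average over $N_j$; since the bound holds identically for every value $N_j \ge 1$, it survives this averaging. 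Marginal coverage then follows from the law of total probability:
\[
\P\bigl(\theta \in C_{\text{locart}}(\X)\bigr) = \sum_{j=1}^J \P\bigl(\theta \in C_{\text{locart}}(\X) \mid \X \in A_j\bigr)\,\P(\X \in A_j) \ge (1-\alpha)\sum_{j=1}^J \P(\X \in A_j) = 1-\alpha.
\]

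The main obstacle is the conditional-exchangeability step: one must argue carefully that slicing the exchangeable sample by the fixed, feature-measurable map $T$ and then conditioning on cell membership does not break exchangeability of the scores within a cell, and one must handle the randomness of $N_j$ by conditioning on it before applying the finite lemma. A secondary nuisance is ties among scores, which are absorbed by the conservative $\lceil\cdot\rceil$ rounding and the $1+1/N_j$ inflation in the quantile, keeping the inequalities non-strict. The empty-cell case $N_j = 0$ cannot occur on the event $\{\X \in A_j\}$ that is relevant to the conditional statement, so it needs no separate treatment.
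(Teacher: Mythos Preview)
Your proposal is correct and follows exactly the paper's approach: apply the standard split-conformal quantile lemma within each fixed cell $A_j$ using conditional exchangeability of the scores, then obtain marginal coverage via the law of total probability; the paper's own proof is a two-line sketch that defers the details to \citet[Theorem~2]{cabezas2025regression}, and you have simply spelled those details out. One small slip worth fixing: the empty-cell case $N_j=0$ \emph{can} occur on the event $\{\X\in A_j\}$, since $N_j$ counts only calibration pairs and the partition is built on an independent split; the standard convention of taking the empirical quantile to be $+\infty$ when $N_j=0$ (so $C_{\text{locart}}(\x)=\Theta$) gives trivial coverage there, and the argument goes through unchanged.
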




In addition, under the regularity conditions of \citet[Theorem~5]{cabezas2025regression}, \locartourmethod{} achieves asymptotic conditional coverage.

\begin{theorem}[\locartourmethod{} asymptotic conditional coverage]
\label{thm:locart_asymptotic}
Let $B$ denote the size of the calibration set. Under the assumptions stated in \citet[Theorem~5]{cabezas2025regression}, \locartourmethod{} satisfies
\[
\lim_{B \to \infty} \mathbb{P}\big(\theta \in C_{\text{locart}}(\X) \,\big|\, \X = \x\big) = 1 - \alpha,
\]
that is, it achieves conditional coverage in the limit as the calibration sample size grows.
\end{theorem}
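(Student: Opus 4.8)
The plan is to obtain this statement as a direct consequence of \citet[Theorem~5]{cabezas2025regression}, after recognizing that \locartourmethod{} is nothing but the generic \texttt{LoCart} procedure instantiated with the Bayesian scoring function $s(\theta;\x)$ playing the role of the nonconformity score. The first step is to condition on (fix) the training set $\{(\theta_i,\X_i)\}_{i=1}^K$, so that the approximate posterior $\widehat p(\theta\mid\x)$—and hence the score $s(\cdot;\cdot)$ derived from it—is a fixed, deterministic measurable map. With $s$ frozen, the only randomness governing the limit $B\to\infty$ comes from the calibration sample, exactly as in the setting of \texttt{LoCart}. I would then make the correspondence explicit: the pairs $(\theta_i,\X_i)$ play the role of the $(Y_i,\X_i)$ pairs, the quantity $s(\theta_i;\X_i)$ plays the role of the conformity score regressed on $\X$ to grow the CART partition $\{A_j\}$, and the local cutoff $t_{1-\alpha}(\X)$ is the within-leaf empirical $(1-\alpha)$-quantile. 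Under this identification the target quantity $\mathbb{P}(\theta\in C_{\text{locart}}(\X)\mid\X=\x)=\mathbb{P}(s(\theta;\X)\le t_{1-\alpha}(\X)\mid\X=\x)$ is precisely the conditional-coverage functional analyzed in that theorem.

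The second step is to verify that the hypotheses of \citet[Theorem~5]{cabezas2025regression} hold for this particular score. These are of two kinds. The distributional conditions require that the calibration pairs be i.i.d.\ (which holds, since each $(\theta_i,\X_i)$ is drawn from the joint law induced by the prior and the simulator) and that the conditional CDF $t\mapsto F(t\mid\x)=\mathbb{P}(s(\theta;\X)\le t\mid\X=\x)$ be continuous and sufficiently regular near its $(1-\alpha)$-quantile, so that empirical quantiles converge to population quantiles. The CART-consistency conditions require that, as $B\to\infty$, the leaf containing $\x$ shrinks (its diameter tends to zero) while the number of calibration points it contains diverges. I would check these for the score induced by the fixed $\widehat p$, noting that continuity of $F(\cdot\mid\x)$ follows whenever the posterior score has no atoms, and that the augmented feature (the estimated conditional variance of $s$) is a deterministic function of $\X$ and therefore neither disturbs measurability nor changes the refinement behavior of the tree.

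The final step is to invoke the theorem to conclude. The mechanism is that CART's splitting rule, driven by the response $s(\theta_i;\X_i)$, asymptotically recovers the coarsest partition on which the conditional law of $s\mid\X$ is constant \citep{meinshausen2006quantile}; within such a cell the local empirical quantile $t_{1-\alpha}(\x)$ converges to the true conditional $(1-\alpha)$-quantile of $s(\theta;\X)\mid\X=\x$, and continuity of $F(\cdot\mid\x)$ then gives $\mathbb{P}(s(\theta;\X)\le t_{1-\alpha}(\X)\mid\X=\x)\to 1-\alpha$. I expect the main obstacle to be the verification rather than the invocation: one must confirm that the specific score $s(\theta;\x)$ built from the neural posterior approximation—which need not resemble a smooth regression residual—still satisfies the continuity and regularity assumptions of Theorem~5, and that the genuinely joint $(\theta,\X)$ dependence (as opposed to an additive-noise regression model) is compatible with the tree-consistency arguments underlying that result. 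Once these are in place, the conclusion is immediate.
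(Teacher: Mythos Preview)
Your proposal is correct and follows essentially the same approach as the paper: the paper's proof is a direct appeal to \citet[Theorem~5]{cabezas2025regression}, noting only that it establishes asymptotic conditional validity for local conformal procedures under regularity conditions on the partition structure and convergence of the estimated quantiles. Your plan is actually more detailed than the paper's own argument, spelling out the identification of $(\theta_i,\X_i)$ with the generic \texttt{LoCart} inputs and the verification steps for the hypotheses; the paper simply invokes the cited result without elaboration.
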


\subsection{\cdfourmethod}


When using the transformed scores 
$s'(\theta; \x) = \widehat{F}_M(s(\theta; \x) \mid \x)$, the \cdfourmethod{} procedure reduces to applying standard conformal prediction with a modified nonconformity score. Here, $\widehat{F}_M(\cdot \mid \x)$ denotes the Monte Carlo estimate of the conditional CDF of the score $s(\theta; \x)$ given $\x$ based on $M$ samples drawn from the estimated posterior, defined by Equation~\ref{eq:ECDF_eq}. In the marginal setting, where calibration and test pairs $(\theta, \X)$ are exchangeable, this transformation preserves validity, and the method retains the usual marginal coverage guarantee.
\begin{theorem}[Marginal coverage]
\label{thm:cdf_marginal}
Assume the calibration pairs $\{(\theta_i, \X_i)\}_{i=1}^B$ and the test pair $(\theta, \X)$ are exchangeable, and let  
$s'_i = \widehat{F}_M(s(\theta_i; \X_i) \mid \X_i)$  
be the transformed conformity scores. Then the conformal region  
\[C_{\text{cdf}}(\x) := \{\theta \in \Theta : \widehat{F}_M(s(\theta; \x) \mid \x) \leq t_{1 - \alpha} \},\]  
with $t_{1 - \alpha}$ the empirical $(1 - \alpha)$-quantile of $\{s'_i\}_{i=1}^B$, satisfies  
\[\mathbb{P}(\theta \in C_{\text{cdf}}(\X)) \geq 1 - \alpha.\]
\end{theorem}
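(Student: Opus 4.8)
The plan is to reduce the statement to the standard split-conformal coverage guarantee by absorbing the CDF transformation into a fixed nonconformity score. The essential observation is that, once we condition on the training set $\{(\theta_i,\X_i)\}_{i=1}^K$ used to fit $\widehat{p}$ together with the auxiliary randomness generating the Monte Carlo draws in Equation~\eqref{eq:ECDF_eq} at every $\x$, the map $\widehat{F}_M(\cdot \mid \cdot)$ becomes a fixed measurable function. Consequently $g(\theta,\x) := \widehat{F}_M(s(\theta;\x)\mid \x)$ is a single deterministic function of the pair $(\theta,\x)$, and the transformed scores $s'_i = g(\theta_i,\X_i)$ and $s'(\theta;\X)=g(\theta,\X)$ arise from applying one and the same function to each calibration and test pair.

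First I would establish that the transformed scores are exchangeable. By the data-split assumption of Section~\ref{sec:theory}, the training set and the Monte Carlo randomness are independent of the calibration-plus-test pairs, so conditioning on them preserves the exchangeability of $(\theta_1,\X_1),\ldots,(\theta_B,\X_B),(\theta,\X)$. Since exchangeability is closed under coordinatewise application of a common fixed measurable map, the collection $s'_1,\ldots,s'_B,\,g(\theta,\X)$ is exchangeable conditionally on this auxiliary randomness.

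Next I would invoke the standard split-conformal quantile lemma: for exchangeable real-valued scores, the test score lies at or below the empirical $(1-\alpha)$-quantile of the $B$ calibration scores---taken with the usual $(1+1/B)$ inflation---with probability at least $1-\alpha$. Because $\{\theta \in C_{\text{cdf}}(\X)\}$ is by definition exactly the event $\{g(\theta,\X) \leq t_{1-\alpha}\}$, this gives $\mathbb{P}(\theta \in C_{\text{cdf}}(\X)\mid \text{aux}) \geq 1-\alpha$ conditionally; taking expectation over the conditioning via the tower property yields the unconditional bound.

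The main obstacle is careful bookkeeping rather than mathematical depth. I must define the quantile with the conservative $(1+1/B)$ correction so that the $\geq 1-\alpha$ bound is exact in finite samples, and verify that the discreteness of $\widehat{F}_M$---which takes values in $\{0,1/M,\ldots,1\}$ and hence produces ties among the $s'_i$---does not break the quantile lemma; ties only ever strengthen the lower bound, so validity is unaffected. A secondary point worth stating explicitly is that no assumption is placed on the quality of $\widehat{F}_M$ or $\widehat{p}$: marginal validity holds for any fixed estimator, since the argument uses only exchangeability and never that $\widehat{F}_M$ approximates the true conditional CDF of the scores.
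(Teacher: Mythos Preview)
Your proposal is correct and follows the same approach as the paper: both treat $s'(\theta;\x)=\widehat{F}_M(s(\theta;\x)\mid\x)$ as a fixed nonconformity score (conditional on training and Monte Carlo randomness) and then invoke the standard split-conformal coverage guarantee via exchangeability. Your version is simply more explicit about the conditioning, the $(1+1/B)$ correction, and the handling of ties, all of which the paper leaves implicit.
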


The conditional validity of \cdfourmethod{} builds on the probability integral transform, following the approach of \citet{dheur2025multi}. If the estimated posterior $\widehat{p}(\theta \mid \x)$ converges to the true posterior $p(\theta \mid \x)$ as the training size $K \to \infty$, and the number of posterior draws $M$ used to compute $\widehat{F}_M$ grows to infinity, then the transformed score
$
s'(\theta; \x) = \widehat{F}_M(s(\theta; \x) \mid \x)
$
converges to the true conditional CDF $F(s(\theta; \x) \mid \x)$ of the score. Since $\theta \sim p(\cdot \mid \x)$, this implies that $s'(\theta; \x)$ becomes approximately $\mathrm{Uniform}(0,1)$ conditional on $\x$. In this case, the calibration scores and the test score are approximately conditionally i.i.d., and the empirical quantile converges to the target level $1 - \alpha$, yielding asymptotic conditional coverage.

The asymptotic conditional validity of the procedure depends on the accuracy of the posterior approximation and of the CDF estimate:  $M$ must grow to ensure that $\widehat{F}_M$ converges to  $\widehat F$ implied by $\widehat p$, $B$ must grow so that the empirical quantile of the transformed scores converges to its population value, and $K$ must grow to guarantee that $\widehat{p}$ converges to $p$.
\begin{theorem}[\cdfourmethod{} asymptotic conditional coverage]
\label{thm:cdf_asymptotic}
Let $B$ be the calibration set size, $K$ the training set size used to estimate the posterior distribution $\widehat{p}(\theta \mid \x)$, and $M$ the number of posterior draws used to compute $\widehat{F}_M$. Under Assumption~\ref{ass:cdf_kl_convergence}, if $B \to \infty$, $K \to \infty$, and $M \to \infty$, then
\[
\lim_{B,\,K,\,M \,\to\, \infty} 
\mathbb{P}\big(\theta \in C_{\text{cdf}}(\X) \mid \X = \x\big) = 1 - \alpha.
\]
\end{theorem}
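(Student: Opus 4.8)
The plan is to show that as $K,M\to\infty$ the transformed score $s'(\theta;\x)=\widehat F_M(s(\theta;\x)\mid\x)$ converges to the probability integral transform $F(s(\theta;\x)\mid\x)$ of the true conditional score distribution, which is exactly $\mathrm{Uniform}(0,1)$ conditional on $\x$; and then to show that as $B\to\infty$ the global empirical cutoff $t_{1-\alpha}$ of the transformed calibration scores converges to $1-\alpha$. Combining these two facts yields $\P(\theta\in C_{\text{cdf}}(\X)\mid\X=\x)=\P(s'(\theta;\x)\le t_{1-\alpha}\mid\X=\x)\to\P(U\le 1-\alpha)=1-\alpha$ for $U\sim\mathrm{Uniform}(0,1)$.

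To organize the three error sources I would introduce, at a fixed $\x$, three conditional CDFs of the score: the target $F(t\mid\x)=\P(s(\theta;\x)\le t\mid\X=\x)$ with $\theta\sim p(\cdot\mid\x)$; the population CDF $\widehat F(t\mid\x)$ induced by the estimated posterior, i.e.\ the law of $s(\theta;\x)$ under $\theta\sim\widehat p(\cdot\mid\x)$; and its Monte Carlo version $\widehat F_M(t\mid\x)$ from Equation~\eqref{eq:ECDF_eq}. First I would control the Monte Carlo gap: for fixed $\widehat p$ and $\x$ the draws $\{\theta_j\}$ are i.i.d.\ from $\widehat p(\cdot\mid\x)$, so the Glivenko--Cantelli theorem gives $\sup_t|\widehat F_M(t\mid\x)-\widehat F(t\mid\x)|\to 0$ almost surely as $M\to\infty$. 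Next I would control the approximation gap using Assumption~\ref{ass:cdf_kl_convergence}: convergence of $\widehat p$ to $p$ in Kullback--Leibler divergence implies, via Pinsker's inequality, convergence in total variation, and since $s(\cdot;\x)$ is a fixed measurable map the data-processing inequality bounds the total-variation distance between the pushforward laws, giving $\sup_t|\widehat F(t\mid\x)-F(t\mid\x)|\le\mathrm{TV}(\widehat p(\cdot\mid\x),p(\cdot\mid\x))\le\sqrt{\mathrm{KL}/2}\to 0$ as $K\to\infty$. A triangle inequality then yields $\sup_t|\widehat F_M(t\mid\x)-F(t\mid\x)|\to 0$ as $K,M\to\infty$.

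With this uniform convergence in hand I would invoke the probability integral transform: assuming $F(\cdot\mid\x)$ is continuous, $F(s(\theta;\x)\mid\x)\sim\mathrm{Uniform}(0,1)$ when $\theta\sim p(\cdot\mid\x)$, so the oracle transform is exactly conditionally uniform and the realized transformed scores are approximately conditionally i.i.d.\ uniform. Applied to every calibration conditioning point, this makes the transformed calibration scores marginally approximately uniform, so by the convergence of empirical quantiles (a consequence of Glivenko--Cantelli applied to $\{s'_i\}_{i=1}^B$) the global cutoff satisfies $t_{1-\alpha}\to 1-\alpha$ as $B\to\infty$. Writing the conditional coverage as $\P(\widehat F_M(s(\theta;\x)\mid\x)\le t_{1-\alpha}\mid\X=\x)$ and substituting the two limits—$\widehat F_M(s(\theta;\x)\mid\x)\Rightarrow U$ conditional on $\x$ and $t_{1-\alpha}\to 1-\alpha$—closes the argument, since continuity of the limiting uniform law lets the threshold perturbation pass through the probability by a Slutsky-type argument.

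The main obstacle is making the three limits interact cleanly rather than sequentially. The same estimator $\widehat F_M$ enters both the calibration scores (one independent Monte Carlo sample per $\X_i$) and the test score at $\x$, so the argument needs the convergence of $\widehat F_M$ to $F$ to hold uniformly over the conditioning variable, not just at a single fixed $\x$; this is precisely the role I expect Assumption~\ref{ass:cdf_kl_convergence} to play, presumably by controlling an $\X$-averaged KL divergence so that Pinsker delivers total-variation (hence CDF) convergence for $p$-almost every $\x$. The remaining care points are the continuity of $F(\cdot\mid\x)$—without it the PIT yields only stochastic dominance and one recovers the inequality rather than equality in the limit—and verifying that the empirical-quantile convergence is not derailed by the transformed scores being only approximately uniform, which requires summarizing the uniform bound above into the quantile limit, e.g.\ via a Dvoretzky--Kiefer--Wolfowitz deviation bound for $\{s'_i\}$ combined with the vanishing approximation error.
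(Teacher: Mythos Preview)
Your proposal is correct and matches the paper's approach: both handle the Monte Carlo error via Glivenko--Cantelli, the posterior-approximation error via Pinsker's inequality applied to Assumption~\ref{ass:cdf_kl_convergence}, and then invoke the probability integral transform together with empirical-quantile convergence to finish. The paper organizes the decomposition around the quantity $\bigl|F(\widehat F_{K,M}^{-1}(t\mid\x)\mid\x)-t\bigr|$ rather than your $\sup_t|\widehat F_M-F|$, but the ingredients and logic are the same, and your explicit mention of the data-processing inequality and the uniformity-over-$\x$ issue are technical points the paper leaves implicit.
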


\section{Experiments}
\label{sec:experiments}

We compare our approach to baselines in terms of conditional and marginal statistical validity using ten SBI benchmarks introduced by \citealt{lueckmann2021benchmarking} (details in Table \ref{tab:benchmark_tasks}, in Appendix \ref{appendix:bench_details}).
These benchmarks are ideal for quantitative comparison because they provide access to samples from the true posterior, which is essential for computing coverage metrics. In contrast, in Subsection \ref{sec:real_application}, we illustrate our method's behavior on a genuinely complex SBI problem (a computational neuroscience model) where such ground-truth posteriors are unavailable. For this application, our analysis is necessarily qualitative, and we compare the resulting credible set shapes for a specific $\x_{\text{obs}}$.
All experiments have a nominal level at $1 - \alpha = 0.9$ with an overall simulation budget of $B_{\text{all}} = 10000$. From this budget, $80\%$ is allocated for training the posterior estimators, and the remaining $20\%$ is reserved for the calibration set. We consider other simulation budgets in Appendix \ref{appendix:additional_res}. We focus on constructing HPD regions, defining the conformity score as $s(\theta;\x)=- \widehat{p}(\theta\mid\x)$ or, for sample-based models, as $s(\theta;\x) \propto -\sum_{l = 1}^L K(\theta,\theta_l)$, using Scott's rule to determine kernel bandwidth \citep{scott2015multivariate}, where $\theta_l \sim \widehat{p}(\cdot\mid\x)$. Posterior estimation is carried out using conditional normalizing flows (NPE; \citep{greenberg2019automatic, papamakarios2021normalizing}) and a conditional diffusion model (NPSE; \citep{geffner2023compositional}). For both estimators, we use the implementations provided in the \texttt{sbi} package \citep{BoeltsDeistler_sbi_2025}, using their standard architectures.

\subsection{Metrics for conditional and marginal validity}
To assess conditional validity, we estimate the conditional coverage for each $\x$ as
\begin{align*}
    \delta(\x, C) = \frac{1}{K} \sum_{k = 1}^K \I\left(\theta_k \in C \left( \x \right) \right) \; ,
\end{align*}
 where $\theta_k \sim p(\theta \mid \mathbf{x})$. We then compute the Mean Absolute Error (MAE) over $B_{\text{sim}}$ fixed observations:
\begin{equation}  
\text{MAE} = \frac{1}{B_{\text{sim}}} \sum_{i=1}^{B_{\text{sim}}} \left| \delta ( \x_i,C ) - (1 - \alpha) \right| \; . 
\label{eq:mae}
\end{equation}  
Lower MAE indicates conditional coverage closer to the nominal level, reflecting better calibration.

We assess marginal coverage by simulating an independent test set $\mathcal{D}_{\text{test}} = \{(\theta_j, \x_j)\}_{j=1}^{B_{\text{test}}}$ and compute the Average Marginal Coverage (AMC) as:
\begin{align*}
    \text{AMC} = \frac{1}{B_{\text{test}}} \sum_{i = 1}^{B_{\text{test}}} \I \left(\theta_i \in C(\x_i)\right) \; .
\end{align*}
Values near the nominal level $1 - \alpha$ indicate good average coverage. We fix $K = 1000$ and $B_{\text{test}} = 2000$. For MAE, we use $B_{\text{sim}} = 500$ or 10, depending on the simulator's posterior sample availability in \texttt{sbibm} \citep{lueckmann2021benchmarking} (see Appendix \ref{appendix:bench_details} for details).

\subsection{Baselines}
\label{sec:baselines}

We compare our approach to three established methods for constructing credible regions:
\begin{itemize}
\item \textbf{Self-calibration}: 
This method constructs credible regions directly from the estimated posterior distribution, $\widehat{p}(\theta \mid \x)$. For each fixed data point $\x$, the method draws a set of $B_{\text{self}}$ (fixed at $1000$) posterior samples, $\theta_i \sim \widehat{p}(\cdot\mid\x)$, and evaluates their corresponding scores, $s(\theta_i;\x)$. From these scores, an empirical threshold $\widehat{t}(\x)$ is computed via Monte Carlo integration, following Equation~\eqref{eq:naive_approach}, to ensure the desired coverage level of $1-\alpha$. 


\item \textbf{Global} \citep{patel2023variational}: This method is a direct application of the vanilla conformal approach for constructing credible regions. First, it uses the calibration set $\mathcal{D}_{\text{cal}}$ to compute nonconformity scores for each data point. Then, it determines a single threshold, $t_{\alpha}$, from these scores, which is then applied uniformly to all new observations, $\x$. 

\item \textbf{HDR} \citep{chung2024sampling}: This multivariate recalibration method corrects miscalibration by learning a monotonic mapping $R$ from posterior density values using a calibration set $\mathcal{D}_{\text{cal}}$. For a new $\x$, it resamples from the posterior to align with $R$, producing calibrated samples that account for dependencies across dimensions. Cutoffs for HDRs are then computed using these recalibrated samples, similarly to self-calibration.
\end{itemize}

\subsection{Results and discussion}
Next, we evaluate the credible regions produced by different methods on the selected benchmarks. Focusing on the NPE base estimator, Figure \ref{fig:NPE_heatmap_figure_MAE_and_Marginal} shows that our approach significantly outperforms existing methods in terms of conditional coverage. Specifically, both \locartourmethod{} and \cdfourmethod{} methods perform statistically better in 8 out of 10 benchmarks, all while maintaining marginal coverage close to the nominal level. Overall, our methods perform well in almost all benchmarks, except for the Lotka-Volterra simulator, where only the global method shows a better performance. The right panel further shows that our proposed methods consistently maintain near-nominal marginal coverage, whereas approaches such as self-calibration and HDR lack this property on some benchmarks. The ability of our methods to produce efficient and well-calibrated regions across various benchmarks, parameter spaces, and data distributions highlights their robust performance.
\begin{figure}[ht]
    \centering
    \includegraphics[width=1\columnwidth]{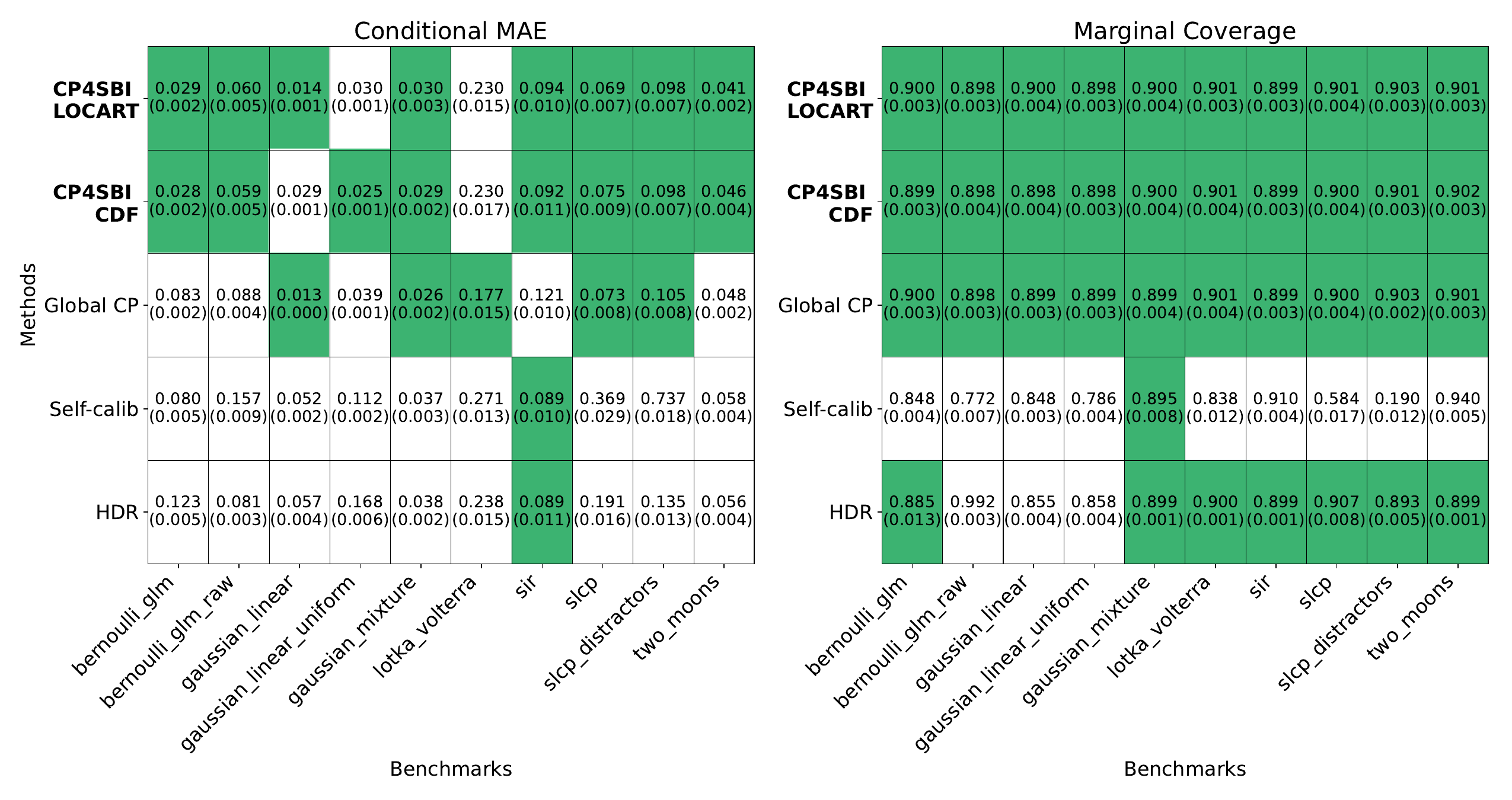}
    \caption{Conditional MAE (left) and marginal coverage (right) for NPE-based credible sets are shown across benchmark tasks, with means and 95\% confidence intervals over 50 runs. Green cells indicate statistically significant improvements (lower MAE or coverage near the nominal level). Our methods, \locartourmethod{} and \cdfourmethod{}, outperform others in 8 of 10 tasks (left) and maintain close-to-nominal marginal coverage (right), demonstrating improved posterior calibration across varied tasks.}
    \label{fig:NPE_heatmap_figure_MAE_and_Marginal}
\end{figure}

Figure \ref{fig:NPSE_heatmap_figure_MAE_and_Marginal} shows that our strategy for constructing calibrated HPD regions from an NPSE base estimator (i.e. a score diffusion model) is also successful. Both of our methods demonstrate better conditional coverage performance than other approaches, as well as solid marginal coverage. In particular, \locartourmethod{} proved to be the most effective, producing credible regions that simultaneously improve conditional coverage while respecting marginal coverage. While the HDR method also shows acceptable conditional coverage in some settings, it does not achieve marginal coverage in any dataset. This highlights the efficiency of our approach in recalibrating credible regions across different posterior estimators.
\begin{figure}[ht]
    \centering
    \includegraphics[width=1\columnwidth]{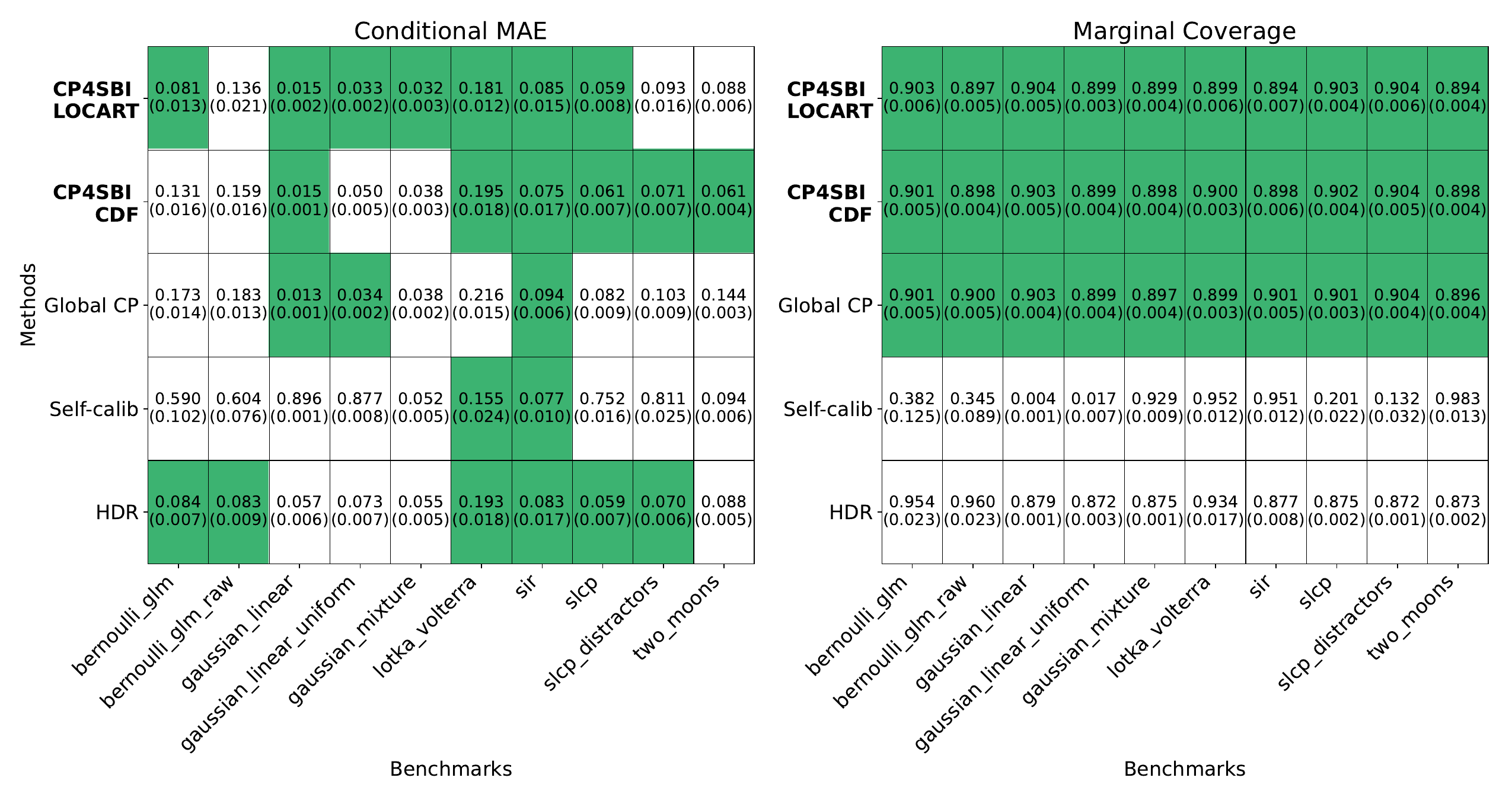}
    \caption{Conditional MAE (left) and marginal coverage (right) for NPSE-based credible sets across benchmark tasks, averaged over 15 runs with 95\% confidence intervals. Green cells highlight statistically significant coverage near the nominal level. \locartourmethod{} ranks among the best in 7 of 10 tasks. While HDR shows good conditional coverage, it fails to maintain marginal coverage. Our methods perform well on both metrics, ensuring calibrated inference.}
    \label{fig:NPSE_heatmap_figure_MAE_and_Marginal}
\end{figure}
Figure \ref{fig:NPE_heatmap_figure_MAE_other_budgets} in Appendix \ref{appendix:additional_res} shows that our approach consistently displays improved conditional coverage performance across two additional budgets ($B_{\text{all}} = 2000$ and $B_{\text{all}} = 20000$). A caveat exists, however: for the smaller budget, \locartourmethod{} performs comparably to the global one, likely because its partitioning strategy struggles with sparse calibration data. For this case, \cdfourmethod{} shows better results. This illustrates how \ourmethod{} adapts its performance based on the available data, highlighting its flexibility across different calibration budgets and posterior estimators of varying quality. This adaptiveness is also reflected in the size results in Table \ref{tab:hpd_areas_2d} in Appendix \ref{appendix:additional_res}, which show \ourmethod{}'s ability to adjust region size, unlike the more rigid global or non-conformal methods. All results obtained in this section reinforce the capacity of \ourmethod{} to enhance calibration in credible regions for fixed observations $\x$.

\subsection{An example on computational neuroscience}
\label{sec:real_application}
A neural mass model (NMM) is a complex non-linear model from computational neuroscience consisting of a system of stochastic differential equations describing the generation of neural activity on a cortical column~\citep{jansen1995electroencephalogram}. Specifically, we use the implementation detailed in~\cite{buckwar2020spectral}, consisting of a three-dimensional parametrization. In this model, $\theta_1$ represents the degree of connectivity between excitatory and inhibitory neurons. The other parameters, $\theta_2$ and $\theta_3$, jointly model the statistical properties (mean and variance) of incoming oscillations from neighboring cortical columns. These three parameters govern the model's dynamics, which in turn generate the output data $\mathbf{x}$—a time series representing the aggregate electrical signals, similar to EEG recordings \cite{rodrigues2021hnpe}.

We evaluate the posterior credible regions by examining each parameter pair of parameters individually, treating the remaining parameter as a nuisance variable. To this end, we train dedicated NPE approximators for each pair and compute their corresponding credible regions, using a training budget of $10000$. Following standard practice in SBI benchmarks, we also analyze a specific observation $\mathbf{x}_{\text{obs}}$ generated from fixed values of the parameters of interest. We compare our approach \locartourmethod{} with the global and the self-calibrated one. Figure \ref{fig:credible_regions_comparisons_pairs} illustrates the resulting $90\%$ HPD regions for this simulator, obtained under a calibration budget of $B = 4000$.
\begin{figure}[htb]
    \centering
    \includegraphics[width=1\columnwidth]{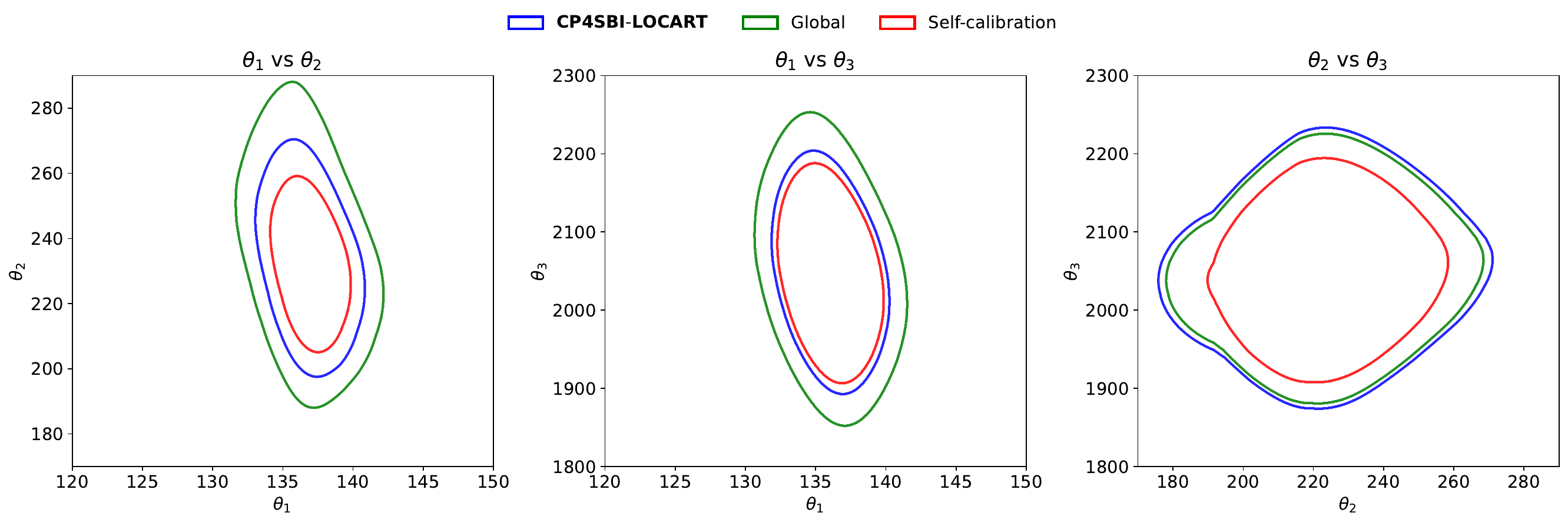}
    \caption{Comparison of $90\%$ HPD credible regions for the NMM model. \locartourmethod{} (in blue) demonstrates clear adaptivity, yielding large, medium, and small credible regions depending on the parameter combination. This contrasts sharply with the Global method (in green), which only produces large regions, and self-calibration (in red), which only produces small ones.}
    \label{fig:credible_regions_comparisons_pairs}
\end{figure}

We observe that \locartourmethod{} (blue) displays highly adaptive behavior across the different parameter pairs, yielding differently sized regions (medium, small, and large, respectively) for each combination. This contrasts sharply with the baselines: the Global method (green) consistently produces large regions in all three scenarios, while Self-calibration (red) consistently produces small regions. This result highlights how \ourmethod{} successfully enhances the local representation of HPD regions. It provides credible sets that are clearly tailored to the specific parameter pair, rather than defaulting to a single, uniform behavior. This demonstrates the method's ability to adapt to local structures, all while retaining the formal guarantee of marginal coverage.


\section{Final Remarks}
\label{sec:conclusions}
In this paper, we tackled the fundamental problem of miscalibration in simulation-based inference. 
We proposed \ourmethod{}, a post-hoc, model-agnostic framework using conformal prediction to build credible sets with finite-sample guarantees. It applies to any SBI method offering posterior samples or density estimates and supports any scoring function, enabling calibrated HPD regions, symmetric intervals, or custom sets.

We proposed two calibration strategies. The first, \cdfourmethod{}, achieves asymptotic conditional coverage by recalibrating scores using an estimate of their conditional CDF. Its guarantees strengthen as the underlying posterior approximation improves—typically as the training sample size increases. The second, \locartourmethod{}, provides finite-sample local coverage by partitioning the data space with a regression tree, adapting the calibration to regions of varying difficulty. This method also attains asymptotic conditional coverage as the size of the calibration set grows. Both methods guarantee correct marginal coverage.
 Our experiments on established SBI benchmarks confirmed that \ourmethod{} effectively improves uncertainty quantification for both neural posterior estimators and diffusion-based models.
 Although we focused on these two local conformal approaches, other modern methods for targeting conditional coverage—such as the one proposed by \cite{gibbs2025conformal} could also be explored in the SBI setting. Investigating their performance in this context is a promising direction for future work.

Despite offering stronger coverage guarantees, \ourmethod{} has some limitations. The tightness of its credible regions depends on the quality of the initial posterior; poor approximations lead to wider, albeit calibrated, sets. Additionally, \locartourmethod{} requires a sufficiently large calibration set to populate its partitions, which can be challenging in high-dimensional tasks. This can be partly mitigated by including posterior variance estimates or other summary statistics in the partitioning features.

This work opens several avenues for future research. Although we focused on two conformal methods, other techniques may yield better performance in specific settings (see, e.g., \citealt{plassier2025rectifying}). We also plan to extend the \ourmethod{} framework to more challenging inference scenarios, such as those involving hierarchical models or discrete parameters. Finally, its applicability is not limited to SBI; it could be a valuable tool for calibrating posteriors from other methods like Variational Inference or Approximate Bayesian Computation.

In conclusion, \ourmethod{} offers a flexible and theoretically grounded approach to improving uncertainty quantification in computational science. By producing credible sets with stronger local and conditional coverage, it enhances the reliability of inferences from complex simulator models. This marks a meaningful advance for many scientific fields, providing a general-purpose tool to support trustworthy discovery.

Code to implement \ourmethod{} and reproduce the experiments and illustrations is available at \url{https://github.com/Monoxido45/CP4SBI}.

\section*{Acknowledgements}
This study was financed in part by the Coordenação de Aperfeiçoamento de Pessoal de Nível Superior - Brasil (CAPES)
- Finance Code 001. L.M.C.C is grateful for the fellowship
provided by São Paulo Research Foundation (FAPESP),
grants 2022/08579-7 and 2025/06168-8. PLCR was supported by a national grant managed by the French National Research Agency (Agence Nationale de la Recherche) attributed to the SBI4C project of the MIAI AI Cluster, under the reference ANR-23-IACL-0006. RI is grateful for the financial support of FAPESP (grant 2023/07068-1) and CNPq (grants 422705/2021-7 and 305065/2023-8 and 403458/2025-0)



\bibliographystyle{plainnat}
\bibliography{biblio}

\newpage
\appendix
\section*{Appendix}
\section{Algorithms}
This appendix details all algorithms related to our approach. Algorithms \ref{alg:cp4sbi_locart} and \ref{alg:cp4sbi_cdf} describe the procedures for \locartourmethod{} and \cdfourmethod{}, respectively. Algorithm \ref{alg:KDE_score} further details how we approximate Highest Posterior Density (HPD) scores for implicit generative methods by using Kernel Density Estimation.
\label{appendix:algorithms}
\begin{algorithm}[ht]
\caption{CP4SBI - LoCart: Local Conformal Prediction for SBI}
\label{alg:cp4sbi_locart}
\KwIn{
    Calibration set $\mathcal{D} = \{(\theta_i, \x_i)\}_{i=1}^B$,
    posterior approximator $\widehat{p}(\theta \mid \x)$,
    conformity score $s(\theta; \X)$,
    nominal level $\alpha$,
    new observation $\x_{\text{obs}}$
}

\textbf{Step I: Score computation} \\
\Indp
1: For each $(\theta_i, \x_i) \in \mathcal{D}$, compute $s_i = s(\theta_i; \x_i)$ using $\widehat{p}$. \\
2: Form the scored dataset $\mathcal{D}' = \{(s_i, \x_i)\}_{i = 1}^B$, then randomly split it into partition $\mathcal{D}'_{\text{part}}$ and cutoff $\mathcal{D}'_{\text{cut}}$ subsets. \\
\Indm

\textbf{Step II: Partition learning} \\
\Indp
1: Fit a regression tree on $\mathcal{D}'_{\text{part}}$ to predict $s$ from $\x$. \\
2: Use the tree to induce a partition $\mathcal{A} = \{A_1, \dots, A_K\}$ of the feature space. \\
3: Define a region mapping $T: \mathcal{X} \rightarrow \mathcal{A}$ such that $T(\x) = A_j$ if $\x \in A_j$. \\
\Indm

\textbf{Step III: Local quantile estimation} \\
\Indp
1: For each region $A_j \in \mathcal{A}$, define the set of calibration indices $I_j = \{i \mid \x_i \in A_j,\; (s_i, \x_i) \in \mathcal{D}'_{\text{cut}}\}$. \\
2: Compute the local cutoff $t_j$ as the empirical $(1 + 1/|I_j|)(1 - \alpha)$-quantile of $\{s_i\}_{i \in I_j}$. \\
\Indm

\textbf{Step IV: Credible region construction} \\
\Indp
1: Assign $\x_{\text{obs}}$ to region $A_k = T(\x_{\text{obs}})$. \\
2: Return the credible region:
\[
R_{\text{CP4SBI}}(\x_{\text{obs}}) = \left\{ \theta \mid s(\theta; \x_{\text{obs}}) \leq t_k \right\}
\]
\Indm

\KwOut{Credible region $R_{\text{CP4SBI}}(\x_{\text{obs}})$ with marginal and local $1 - \alpha$ coverage}
\end{algorithm}

\begin{algorithm}[H]
\caption{CP4SBI-CDF: CDF-split for SBI}
\label{alg:cp4sbi_cdf}
\KwIn{Calibration set $\mathcal{D} = \{(\theta_i,\x_i)\}_{i =1}^B$, estimated posterior $\widehat{p}(\theta\mid \x)$, conformity score $s(\theta;\x)$, nominal level $\alpha$, new observation $\x_{\text{obs}}$}

\textbf{Step I: Score computation} \\
\Indp
1: For each $(\theta_i, \x_i) \in \mathcal{D}$, compute $s_i = s(\theta_i; \x_i)$ using $\widehat{p}$. \\
2: Form the scored dataset $\mathcal{D}' = \{(s_i, \x_i)\}_{i = 1}^B$ \\
\Indm

\textbf{Step II: Compute CDF scores} \\
\Indp
1: Compute $s'_i = s'(\theta_i; \x_i) = \widehat{F}(s(\theta_i;\X)\mid \mathbf{X} = \x_i)$ on $\mathcal{D}'$, using $\widehat{p}$ to estimate each $\widehat{F}(\cdot \mid \mathbf{X} = \x_i)$. \\
\Indm
\textbf{Step III: Cutoff estimation on transformed scores} \\
\Indp
1: Compute the cutoff $t_{1-\alpha}$ as the empirical $(1 + 1/B)(1 - \alpha)$-quantile of $\{s_i'\}_{i = 1}^B$ \\
\Indm

\textbf{Step IV: Compute credibility set} \\
\Indp
1: Compute the set ${C}_{\mathrm{CP4SBI}}(\x_{n + 1})$ as:
\vspace{-2.5mm}
   \begin{align*}
   {R}_{\mathrm{CP4SBI}}(&\x_{\text{obs}}) = \{\theta \mid s'(\theta; \x_{\text{obs}}) \leq t_{1-\alpha} \} \\
   &= \{\theta \mid s(\theta; \x_{\text{obs}}) \leq \widehat{F}^{-1}(t_{1 - \alpha}\mid\X = \x_{\text{obs}}) \} \; .
   \end{align*}
\\
\Indm
\KwOut{Credible region $R_{\text{CP4SBI}}(\x_{\text{obs}})$ with marginal and asymptotic conditional $1 - \alpha$ coverage}
\end{algorithm}

\bigskip

\begin{algorithm}[H]
\caption{KDE approximation of HPD score}
\label{alg:KDE_score}
\KwIn{Calibration set $\mathcal{D} = \{(\theta_i,\x_i)\}_{i =1}^B$, estimated posterior $\widehat{p}(\theta\mid \x)$, KDE budget $L$}
\Indp
For each $(\theta, \x) \in \mathcal{D}$, do: \\
\hspace{5mm} Simulate $L$ samples $\tilde{\theta} \sim \widehat{p}(\cdot \mid \x)$ \\
\hspace{5mm} Fit a KDE on data generated from posterior estimator $\{\tilde{\theta}_l\}_{l= 1}^L$\\
\hspace{5mm} Define $s(\theta; \x) = \frac{1}{L} \sum_{l = 1}^L K(\theta,\theta_l)$
\\
\Indm
\KwOut{Conformal scores $\{s(\theta_i, \x_i)\}_{i = 1}^B$}
\end{algorithm}

\begin{algorithm}[ht]
\caption{Vanilla Conformal Prediction applied to SBI}
\label{alg:vanilla_cp}
\KwIn{
    Calibration set $\mathcal{D} = \{(\theta_i, \x_i)\}_{i=1}^B$,
    posterior approximator $\widehat{p}(\theta \mid \x)$,
    conformity score $s(\theta; \X)$,
    nominal level $\alpha$,
    new observation $\x_{\text{obs}}$
}

\textbf{Step I: Score computation} \\
\Indp
1: For each $(\theta_i, \x_i) \in \mathcal{D}$, compute scores $s_i = s(\theta_i; \x_i)$ using $\widehat{p}$. \\
\Indm

\textbf{Step II: Cutoff estimation} \\
\Indp
1: Compute the cutoff $t_{1-\alpha}$ as the empirical $(1 + 1/B)(1-\alpha)$-quantile of $\{s_i\}_{i = 1}^B$ \\
\Indm

\textbf{Step III: Credible region construction} \\
\Indp
1: Compute the set $R_{\text{global}}(\x_{n+1})$ as:
\[
R_{\text{global}}(\x_{\text{obs}}) = \left\{ \theta \mid s(\theta; \x_{\text{obs}}) \leq t_{1 - \alpha} \right\}
\]
\Indm

\KwOut{Credible region $R_{\text{global}}(\x_{\text{obs}})$ with marginal $1 - \alpha$ coverage}
\end{algorithm}

\clearpage
\section{Proofs}\label{appendix:proofs}

This appendix contains formal proofs for the theoretical results stated in Section~\ref{sec:theory}. We organize the material according to the corresponding methods: \locartourmethod{} and \cdfourmethod{}.

\subsection{\locartourmethod}

We begin by justifying the finite-sample local and marginal coverage guarantees of \locartourmethod{}. The result follows directly from standard conformal prediction arguments applied independently within each partition element.

\begin{proof}[Proof of Theorem \ref{thm:locart_local_marginal}]
    This is a straightforward application of standard conformal prediction (see, e.g., \citealt{angelopoulos2023conformal}). Since the conformal threshold \({t}_{1-\alpha}(\x)\) is computed within each region \(A_j\) using calibration scores that are exchangeable within that region, the conformal guarantee holds conditional on \(\x \in A_j\). Marginal validity follows by averaging over all regions via the law of total probability. A complete version of this argument is presented in \citet[Theorem~2]{cabezas2025regression}.
\end{proof}

We now present the proof of the asymptotic conditional coverage guarantee for \locartourmethod{}. This result builds upon the regularity assumptions established in \cite{cabezas2025regression}.

\begin{assumption}
\label{assumption:partition_size}
Let $A_B(\x)$ denote the \locartourmethod{} partition element assigned to $\x$, when the calibration set has size $B$. Then,
\[
    \liminf_{B \to \infty} \; B \cdot \inf_{\x \in \mathcal{X}} \P[A_B(\x)] \;>\; 0.
\]    
\end{assumption}

\begin{assumption}
\label{assumption:quantile_convergence}
Let $F_{\x}(t \mid \X_i) := \P[s(\theta_i; \X_i) \le t \mid \X_i \in A_B(\x)]$ denote the conditional distribution of the conformity scores within region $A_B(\x)$.  
For every $\x$, $F_{\x}$ is continuous and increasing in a neighborhood of the $(1 - \alpha)$ quantile of the true conditional score distribution $F(t \mid \x)$, and
\[
    \sup_{\x \in \mathcal{X}} 
    \big| F_{\x}^{-1}(1 - \alpha \mid \X_i) - F^{-1}(1 - \alpha \mid \x) \big|
    = o_{\P}(1).
\]
\end{assumption}

\begin{proof}[Proof of Theorem \ref{thm:locart_asymptotic}]
Under Assumptions \ref{assumption:partition_size} and \ref{assumption:quantile_convergence}, the result follows directly from \citet[Theorem~5]{cabezas2025regression}.
In particular, Assumption \ref{assumption:partition_size} ensures that the partition elements $A_B(\x)$ remain sufficiently populated as $B$ increases,
while Assumption \ref{assumption:quantile_convergence} guarantees that the estimated conditional quantiles of the conformity scores converge in probability
to their true conditional counterparts within each region.
\end{proof}

\subsection{\cdfourmethod}

We begin by formalizing the transformation used in \cdfourmethod{}. Let
\[
F(s(\theta_*;\x)\mid \x) = \int \mathbb{I}\{s(\theta;\x) \leq s(\theta_*;\x)\} \, p(\theta \mid \x)\, d\theta
\]
denote the true conditional cumulative distribution function (CDF) of the score \(s(\theta; \x)\), evaluated at a fixed test point \((\theta_*, \x)\). The estimated version is defined as
\[
s'(\theta_*;\x) = \widehat{F}(s(\theta_*;x)\mid \x) = \int \mathbb{I}\{s(\theta;\x) \leq s(\theta_*;\x)\} \, \widehat{p}(\theta \mid \x)\, d\theta,
\]
where \(\widehat{p}(\theta \mid \x)\) is an estimate of the conditional posterior density ${p}(\theta \mid \x)$.

In the marginal setting, the \cdfourmethod{} procedure can be interpreted as standard conformal prediction applied to a transformed score function \(s'(\theta; x) = \widehat{F}(s(\theta; x) \mid x)\). The calibration scores \(s'_i = \widehat{F}(s(\theta_i; \X_i) \mid \X_i)\) are computed from i.i.d.\ samples \((\theta_i, \X_i)\), and the conformal region is constructed using their adjusted empirical \((1 - \alpha)\)-quantile. Since the procedure follows the usual conformal recipe—changing only the score function—the marginal coverage guarantee holds by the standard conformal prediction argument.

\begin{proof}[Proof of Theorem \ref{thm:cdf_marginal}]
    The procedure applies conformal prediction to the transformed scores \(s'_i = \widehat{F}(s(\theta_i; \X_i) \mid \X_i)\), which are computed from exchangeable calibration pairs. Since the conformal region is defined by comparing the test score \(s'(\theta; \x)\) to the empirical \((1 - \alpha)\)-quantile of the calibration scores, the standard conformal guarantee ensures that
    \[
    \mathbb{P}(s'(\theta; \X) \leq t_{1 - \alpha}) \geq 1 - \alpha,
    \]
    where \((\theta, \X)\) is drawn jointly from the same distribution as the calibration data. This implies marginal coverage of the conformal region \({C}_{\text{cdf}}(\x)\).
\end{proof}

We now turn to the conditional coverage properties of \cdfourmethod{}. Unlike the marginal case, conditional validity depends on three sources of approximation: the accuracy of the posterior estimator $\widehat{p}(\theta \mid \x)$, the Monte Carlo approximation used to compute the transformed score $\widehat{F}_M(s(\theta; \x) \mid \x)$, and the estimation of the conformal quantile from a finite calibration set. To build intuition, we first consider an idealized setting in which the posterior is exact ($\widehat{p} = p$) and the conditional CDF $F(s(\theta; x) \mid x)$ is known exactly, so that the only remaining source of error comes from estimating the conformal quantile using a finite calibration set. In this setting, conditional coverage follows from the probability integral transform as $B \to \infty$, as formalized in Theorem~\ref{thm:cdf_asymptotic_ideal}.

\begin{theorem}[Asymptotic conditional coverage: idealized case]
\label{thm:cdf_asymptotic_ideal}
Suppose $\widehat{p} = p$ and the CDF transformation $F(s(\theta; \x) \mid \x)$ is known exactly. Then, as the calibration size $B \to \infty$, \cdfourmethod{} achieves asymptotic conditional coverage:
\[
\lim_{B \to \infty} \mathbb{P}\big(\theta \in \widehat{C}(\x) \mid \X = \x\big) = 1 - \alpha.
\]

\end{theorem}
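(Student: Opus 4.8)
The plan is to exploit the probability integral transform (PIT) to collapse the problem onto the convergence of an empirical quantile of i.i.d.\ uniform variables. The key observation in this idealized regime is that the $\x$-dependence of the transformed score disappears entirely, so that conditional coverage reduces to a statement about a single, $\x$-free uniform law. First I would fix $\x$ and assume (as is implicit in the continuous setting) that the true conditional CDF $F(\cdot\mid\x)$ of $s(\theta;\x)$ is continuous. Since $\theta\sim p(\cdot\mid\x)$ and $\widehat{p}=p$, the PIT gives that $s'(\theta;\x)=F(s(\theta;\x)\mid\x)$ is exactly $\mathrm{Uniform}(0,1)$ conditional on $\X=\x$. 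The crucial point is that this holds for \emph{every} $\x$ with the same uniform law, so that for any constant $t\in[0,1]$ the conditional coverage map $\mathbb{P}(s'(\theta;\x)\le t\mid\X=\x)=t$ does not depend on $\x$ at all.

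Next I would argue that the transformed calibration scores are i.i.d.\ uniform. Each $s'_i=F(s(\theta_i;\X_i)\mid\X_i)$ is, conditional on $\X_i$, uniform regardless of the realized value of $\X_i$; marginalizing over $\X_i$ therefore leaves $s'_i\sim\mathrm{Uniform}(0,1)$, and since the calibration pairs are i.i.d., the collection $\{s'_i\}_{i=1}^B$ is i.i.d.\ uniform. Consequently, the threshold $t_{1-\alpha}$, defined as the empirical $(1+1/B)(1-\alpha)$-quantile of $\{s'_i\}_{i=1}^B$, is an empirical quantile of a continuous distribution. By the Glivenko--Cantelli theorem (equivalently, consistency of empirical quantiles), $t_{1-\alpha}\xrightarrow{\text{a.s.}}1-\alpha$ as $B\to\infty$, with the finite-sample inflation factor $1/B$ vanishing in the limit.

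Finally I would assemble the pieces while carefully accounting for the randomness of the cutoff. Since the test pair $(\theta,\X)$ is independent of the calibration set, I would condition on the latter: given the calibration data and $\X=\x$, Step one gives that the coverage equals exactly $t_{1-\alpha}$ (as $t_{1-\alpha}\in[0,1]$). Hence
\[
\mathbb{P}\big(\theta\in\widehat{C}(\x)\,\big|\,\X=\x\big)=\mathbb{E}\big[t_{1-\alpha}\big],
\]
and because $t_{1-\alpha}\to 1-\alpha$ almost surely while remaining bounded in $[0,1]$, the bounded convergence theorem yields $\mathbb{E}[t_{1-\alpha}]\to 1-\alpha$, which is the claim.

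The genuinely delicate step is the last one: the cutoff $t_{1-\alpha}$ is itself a function of the calibration draws yet appears inside the event whose probability we evaluate, so the clean reduction to $\mathbb{E}[t_{1-\alpha}]$ hinges on both the independence of calibration and test data and the \emph{exactness} of the PIT (which is what makes the per-$\x$ coverage equal to $t_{1-\alpha}$ with no residual $\x$-dependence). A secondary caveat worth flagging is the continuity assumption on $F(\cdot\mid\x)$: in the presence of atoms the PIT produces only sub-uniform transformed scores, and one would then obtain a conditional coverage lower bound rather than the exact limit $1-\alpha$.
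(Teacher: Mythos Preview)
Your proposal is correct and follows essentially the same route as the paper: invoke the probability integral transform to render the transformed score uniform given $\x$, observe that the calibration scores are i.i.d.\ $\mathrm{Uniform}(0,1)$, and use consistency of the empirical quantile to conclude. Your treatment is in fact more careful than the paper's---you make the continuity assumption for the PIT explicit, and you handle the randomness of $t_{1-\alpha}$ through conditioning and bounded convergence, whereas the paper writes $\mathbb{P}(s'(\theta;\x)\le t_{1-\alpha}\mid\x)=t_{1-\alpha}$ somewhat informally and then passes to the limit directly.
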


\begin{proof}
If $\widehat{p} = p$, then the transformed score becomes
\[
s'(\theta; \x) = F(s(\theta; \x) \mid \x),
\]
where $F(\cdot \mid \x)$ denotes the true conditional CDF of the score. Since $\theta \sim p(\cdot \mid \x)$, it follows from the probability integral transform that
\[
s'(\theta; \x) \,\big|\, \x \;\sim\; \mathrm{Uniform}(0,1).
\]
Therefore, for any fixed $\x$,
\[
\mathbb{P}\big(s'(\theta; \x) \leq t_{1-\alpha} \mid \x\big) = t_{1-\alpha}.
\]
Moreover, if the threshold $t_{1-\alpha}$ is computed as the empirical $(1 + 1/B)(1 - \alpha)$-quantile of i.i.d.\ $\mathrm{Uniform}(0,1)$ calibration scores, then
\[
t_{1-\alpha} \to 1 - \alpha \quad \text{as } B \to \infty,
\]
which implies that the procedure achieves asymptotic conditional coverage at level $1 - \alpha$ \citep{dheur2025multi}.
\end{proof}

\bigskip

Suppose now that the posterior estimate differs from the true posterior, but that the CDF transformation $\widehat{F}(s(\theta; \x) \mid \x)$ is still evaluated exactly from its integral definition under the approximate posterior. To make the dependence on the training sample size explicit, we write $\widehat{p}_K(\theta \mid \x)$ for the posterior estimate obtained from a training set of size $K$, and keep $p(\theta \mid \x)$ for the true posterior. Accordingly, we define
\[
\widehat{F}_K(s(\theta;\x)\mid \x) \;=\; \int \mathbb{I}\{s(\theta';\x) \leq s(\theta;\x)\}\,\widehat{p}_K(\theta' \mid \x)\, d\theta'.
\]
In this setting, the transformed score $s'(\theta; \x) = \widehat{F}_K(s(\theta;\x)\mid \x)$ is no longer uniformly distributed under $p$, and conditional validity may be compromised. Our goal is to quantify the deviation from the nominal coverage level as a function of the discrepancy between $\widehat{p}_K$ and $p$.

To recover conditional coverage in the limit, we require that the divergence between $\widehat{p}_K$ and $p$ vanishes as the posterior approximation improves.

\begin{assumption}[KL convergence of the approximate posterior]
\label{ass:cdf_kl_convergence}
Let
\[
\delta_K(\x) :=  \mathrm{KL}\left( \widehat{p}_K(\cdot \mid \x) \,\|\, p(\cdot \mid \x) \right).
\]
We assume that $\widehat{p}_K(\cdot \mid \x)$ is absolutely continuous with respect to $p(\cdot \mid \x)$, and that for almost every $\x$ with respect to the distribution of $\X$, we have $\delta_K(\x) \to 0$ as $K \to \infty$.
\end{assumption}

This assumption is reasonable when $\widehat{p}_K$ is obtained through methods that aim to approximate the true posterior by minimizing a divergence such as the KL. In such cases, it is natural to expect $\delta_K(\x)$ to decrease as $K$ grows, provided the optimization is effective and the approximating family is sufficiently flexible.

\begin{theorem}[Asymptotic conditional coverage: approximate posterior case]
\label{thm:cdf_asymptotic_kl}
Under Assumption~\ref{ass:cdf_kl_convergence}, and assuming that the calibration size \(B \to \infty\), the \cdfourmethod{} procedure achieves asymptotic conditional coverage:
\[
\lim_{B \to \infty} \; \lim_{K \to \infty} \; \mathbb{P}\big(\theta \in {C}_{\text{cdf}}(\x) \,\big|\, \X = \x\big) \;=\; 1 - \alpha
\quad \text{for almost every $\x$}.
\]
\end{theorem}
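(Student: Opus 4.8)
The plan is to treat the approximate-posterior case as a controlled perturbation of the idealized result already established in Theorem~\ref{thm:cdf_asymptotic_ideal}, where the perturbation is governed by the divergence $\delta_K(\x)$. The crucial observation is that the gap between the estimated transform $\widehat{F}_K(\cdot \mid \x)$ and the true conditional CDF $F(\cdot \mid \x)$ is bounded \emph{uniformly} in its argument by the $L^1$ distance between the two posterior densities: for every threshold $t$,
\[
\bigl|\widehat{F}_K(t \mid \x) - F(t \mid \x)\bigr|
= \left| \int \mathbb{I}\{s(\theta';\x) \le t\}\,\bigl(\widehat{p}_K(\theta' \mid \x) - p(\theta' \mid \x)\bigr)\, d\theta' \right|
\le \int \bigl|\widehat{p}_K - p\bigr|\, d\theta'.
\]
Pinsker's inequality bounds the right-hand side by $\sqrt{2\,\delta_K(\x)}$, giving $\sup_t |\widehat{F}_K(t \mid \x) - F(t \mid \x)| \le \sqrt{2\,\delta_K(\x)}$. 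By Assumption~\ref{ass:cdf_kl_convergence}, this tends to $0$ for almost every $\x$ as $K \to \infty$, so $\widehat{F}_K$ converges uniformly to $F$.

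With this bound, I would first handle the inner limit $K \to \infty$ for fixed $B$. Writing $U := F(s(\theta;\x) \mid \x)$, the uniform bound yields the deterministic estimate $|s'(\theta;\x) - U| \le \sqrt{2\,\delta_K(\x)}$ for every $\theta$, so the transformed test score converges to $U$, which is $\mathrm{Uniform}(0,1)$ conditional on $\x$ by the probability integral transform (here I invoke that the conditional law of $s(\theta;\x)$ under $p$ has no atoms, as already used in Theorem~\ref{thm:cdf_asymptotic_ideal}). Applying the same estimate at each calibration input $\X_i$ shows that the transformed calibration scores $s'_i$ converge to i.i.d.\ $\mathrm{Uniform}(0,1)$ variables $U_i = F(s(\theta_i;\X_i) \mid \X_i)$ for almost every calibration realization. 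Since the empirical $(1+1/B)(1-\alpha)$-quantile is exactly the $\lceil (B+1)(1-\alpha)\rceil$-th order statistic, and sorting is continuous in the sample, the conformal threshold $t_{1-\alpha}$ converges almost surely to the corresponding order statistic $t^{(B)}$ of the $\{U_i\}_{i=1}^B$.

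I would then combine the two convergences. Because the test draw $\theta$ is independent of the calibration set and $U$ has a continuous law, the boundary event $\{U = t^{(B)}\}$ has probability zero, so $\mathbb{I}\{s'(\theta;\x) \le t_{1-\alpha}\} \to \mathbb{I}\{U \le t^{(B)}\}$ almost surely. Bounded convergence then gives
\[
\lim_{K \to \infty} \mathbb{P}\bigl(\theta \in C_{\text{cdf}}(\x) \mid \X = \x\bigr)
= \mathbb{P}\bigl(U \le t^{(B)}\bigr)
= \mathbb{E}\bigl[t^{(B)}\bigr]
= \frac{\lceil (B+1)(1-\alpha)\rceil}{B+1},
\]
where the middle equality uses independence and uniformity of $U$, and the last uses the mean of a uniform order statistic. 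Taking the outer limit $B \to \infty$ sends this exact finite-sample coverage to $1 - \alpha$, which is the claim.

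The main obstacle is the rigorous interchange of the $K \to \infty$ limit with the conditional expectation defining coverage: one must establish \emph{joint} almost-sure convergence of both the test-side transformed score and the calibration-side empirical quantile, and then rule out mass accumulating on the decision boundary. The Pinsker bound is what makes the test-side convergence deterministic and uniform over $\theta$, which keeps this step clean; the remaining delicate point is that the order-statistic map inherits the convergence, which relies on the absence of ties among the limiting uniforms together with Assumption~\ref{ass:cdf_kl_convergence} holding simultaneously for almost every conditioning value $\x$ and almost every calibration point.
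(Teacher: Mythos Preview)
Your proposal is correct and follows essentially the same route as the paper: both control the perturbation of the score CDF via Pinsker's inequality (the paper bounds $|F(\widehat{F}_K^{-1}(t\mid\x)\mid\x) - t|$ directly, while you bound $\sup_t|\widehat{F}_K(t\mid\x)-F(t\mid\x)|$, which avoids inverting $\widehat{F}_K$) and then reduce to the idealized uniform-score setting of Theorem~\ref{thm:cdf_asymptotic_ideal}. Your treatment of the calibration threshold is more explicit---tracking order-statistic convergence and computing the exact finite-$B$ coverage $\lceil(B{+}1)(1{-}\alpha)\rceil/(B{+}1)$---whereas the paper simply defers that step to the idealized theorem.
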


\begin{proof}[Proof of Theorem \ref{thm:cdf_asymptotic_kl}]
If $\widehat{p}_K \neq p$, then the distribution of
\[
s'(\theta; \x) \;=\; \widehat{F}_K(s(\theta; \x) \mid \x)
\]
under $p(\theta \mid \x)$ is not uniform. For any fixed $\x$, the conditional coverage can be written as
\begin{align*}
    \mathbb{P}\big(s'(\theta; \x) \leq t \,\big|\, \x\big)
    &= \mathbb{P}\left(s(\theta; \x) \leq \widehat{F}^{-1}_K(t \mid \x) \,\middle|\, \x\right) \\
    &= F\left( \widehat{F}^{-1}_K(t \mid \x) \,\middle|\, \x \right),
\end{align*}
where $F(\cdot \mid \x)$ and $\widehat{F}_K(\cdot \mid \x)$ denote the CDFs of $s(\theta; \x)$ under $p$ and $\widehat{p}_K$, respectively.

Since $\widehat{p}_K(\cdot \mid \x)$ is absolutely continuous with respect to $p(\cdot \mid \x)$, Pinsker’s inequality \citep[Theorem 4.19]{boucheron2013concentration} gives
\begin{align*}
    \left| F\left( \widehat{F}^{-1}_K(t \mid \x) \mid \x \right) - t \right|
    &= \left| F\left( \widehat{F}^{-1}_K(t \mid \x) \mid \x \right)
        - \widehat{F}_K\left( \widehat{F}^{-1}_K(t \mid \x) \mid \x \right) \right| \\
    &\leq \sqrt{ \frac{1}{2} \, \mathrm{KL}\left( \widehat{p}_K(\cdot \mid \x) \,\|\, p(\cdot \mid \x) \right) } \\
    &= \sqrt{\delta_K(\x) / 2}.
\end{align*}
Therefore,
\[
\mathbb{P}\big(s'(\theta; \x) \leq t \mid \x\big)
\;\geq\; t - \sqrt{\delta_K(\x) / 2}.
\]
The correction term $\sqrt{\delta_K(\x)/2}$ quantifies the effect of posterior misspecification.  
Under Assumption~\ref{ass:cdf_kl_convergence}, we have $\delta_K(\x) \to 0$ for almost every $\x$ as $K \to \infty$, and hence
\[
\lim_{K\to \infty}\mathbb{P}\big(s'(\theta; \x) \leq t \mid \x\big)= t.
\]
The remainder of the argument is identical to the proof of Theorem~\ref{thm:cdf_asymptotic_ideal}, yielding the stated asymptotic conditional coverage.
\end{proof}

We now address the final source of approximation in the \cdfourmethod{} procedure: the Monte Carlo estimation of the transformed score. While the previous results accounted for the error introduced by approximating the posterior $\widehat{p}_K(\theta \mid \x)$, in practice the CDF $\widehat{F}_K(s(\theta; \x) \mid \x)$ must also be approximated using a finite set of posterior samples. We show that this additional source of error vanishes as the number of samples grows, completing the proof of asymptotic conditional coverage in Theorem~\ref{thm:cdf_asymptotic}.

\begin{proof}[Proof of Theorem~\ref{thm:cdf_asymptotic}]
Fix $\x$. Let $\{\theta_j\}_{j=1}^M \stackrel{\text{i.i.d.}}{\sim} \widehat{p}_K(\cdot \mid \x)$ and denote by $\widehat{F}_{K,M}(\cdot \mid \x)$ the empirical CDF based on these samples, and by $\widehat{F}_K(\cdot \mid \x)$ its population counterpart under $\widehat{p}_K$. For any $t \in (0,1)$, consider
\[
\Delta_{K,M}(t;\x) := \Big| F(\widehat{F}_{K,M}^{-1}(t \mid \x) \mid \x) - t \Big|.
\]

We decompose
\[
\Delta_{K,M}(t;\x) \leq \Big| F(\widehat{F}_{K,M}^{-1}(t)\mid \x) - F(\widehat{F}_{K}^{-1}(t)\mid \x) \Big| + \Big| F(\widehat{F}_{K}^{-1}(t)\mid \x) - t \Big|.
\]

For fixed $K$, the Glivenko–Cantelli theorem ensures $\widehat{F}_{K,M} \to \widehat{F}_K$ uniformly as $M\to\infty$, hence $\widehat{F}_{K,M}^{-1}(t) \to \widehat{F}_K^{-1}(t)$. By continuity of $F(\cdot\mid \x)$, the first difference converges to zero, so that
\[
\lim_{M\to\infty} \Delta_{K,M}(t;\x) = \Big| F(\widehat{F}_{K}^{-1}(t)\mid \x) - t \Big|.
\]

Next, under Assumption~\ref{ass:cdf_kl_convergence} the divergence $\mathrm{KL}(\widehat{p}_K(\cdot \mid \x)\,\|\,p(\cdot \mid \x))$ vanishes as $K\to\infty$, which by Pinsker’s inequality implies that the distribution functions converge. In particular,
\[
\Big| F(\widehat{F}_{K}^{-1}(t)\mid \x) - t \Big| \to 0
\quad \text{for a.e. }\x.
\]

Thus,
\[
\lim_{K\to\infty}\lim_{M\to\infty} \Delta_{K,M}(t;\x) = 0.
\]
In other words, the transformed scores converge to a uniform distribution 
in the limit, so that we are precisely in the setting of 
Theorem~\ref{thm:cdf_asymptotic_ideal}. Therefore,
\[
\mathbb{P}\big(\theta \in C_{\mathrm{cdf}}(\x) \,\big|\, X=\x\big) \to 1-\alpha
\quad \text{for a.e. }\x.
\]
\end{proof}

\newpage
\section{Experiment details}\label{appendix:exps}
\subsection{Benchmark details}
\label{appendix:bench_details}
Table \ref{tab:benchmark_tasks} details the specific characteristics of each benchmark task, including the number of dimensions, the prior used, and the availability of true posterior samples. For tasks such as \emph{Bernoulli GLM} (and Raw), \emph{SLCP} (and Distractors), \emph{Lotka-Volterra}, and \emph{SIR}, only $10$ pre-set observations with true posterior samples are available in the \texttt{sbibm} package. Meanwhile, the remaining tasks allow for a higher budget of up to $500$ observations with true posterior samples.
\begin{table}[ht]
\centering
\caption{Summary of benchmark tasks and their key characteristics.}
\label{tab:benchmark_tasks}
\renewcommand{\arraystretch}{1.3} 
\begin{adjustbox}{max width=\textwidth}
\begin{tabularx}{\linewidth}{ccccX} 
\toprule
\textbf{Task} & \textbf{Parameters} & \textbf{Prior} & $B_{\text{sim}}$ & \textbf{Description} \\
\midrule
Bernoulli GLM & 10 & Conjugate & 10 & Generalized Linear Model with Bernoulli observations and uses sufficient statistics (10-D) \\
Bernoulli GLM Raw & 10 & Conjugate & 10 & Raw data version (100-D) of Bernoulli GLM. \\
Gaussian Linear & 10 & Gaussian & 500 & Mean inference with fixed covariance and conjugate prior \\
Gaussian Linear Unif. & 10 & Uniform & 500 & Same as Gaussian Linear, but using a uniform prior. \\
Gaussian Mixture & 2 & Uniform & 500 & Bimodal mixture of Gaussians and ABC benchmark case \\
Lotka-Volterra & 4 & Lognormal & 10 & An ecological model describing the dynamics of two interacting species, such as a prey-predator relationship. \\
SIR & 2 & Lognormal & 10 & An influential three-state epidemiological model parameterized by contact rate and mean recovery. \\
SLCP & 5 & Uniform & 10 & A challenging inference task that starts from a simple likelihood and a complex posterior. \\
SCLP Distractors & 5 & Uniform & 10 & Same task as SLCP, but with additional 100 noisy features (distractors). \\
Two Moons & 2 & Uniform & 10 & Crescent-shaped posterior and tests multimodal performance \\
\bottomrule
\end{tabularx}
\end{adjustbox}
\end{table}

For the \emph{Gaussian Mixture} task, we made specific modifications to the default hyperparameters of the simulator and prior to improve its behavior and enable more fruitful comparisons. We set the uniform prior's limits to $-3$ and $3$ and changed the multiplicative factor of each mixture mean parameter (which are the parameters of interest) from $1$ to $0.8$.

\subsection{Illustration Details}
\label{appendix:illustration_details}
For both illustrations,  we used a credibility level of $1- \alpha = 0.9$ and an overall simulation budget of $B_{\text{all}} = 20000$. Of this budget, $80\%$ was used for training the posterior approximator, with the remaining $20\%$ reserved for the calibration step.
\vspace{3mm}

\textbf{Nuisance parameter illustration.} We utilized the 10-dimensional Gaussian Linear Uniform task, as detailed in Table \ref{tab:benchmark_tasks}. The first two parameters were set as our parameters of interest, $\phi = (\theta_1, \theta_2)$. The posterior estimator $\widehat{p}(\phi \mid \x)$ was a Neural Posterior Estimator (NPE) based on conditional normalizing flows from the \texttt{sbi} package, using its standard architecture. We compare the credible regions derived from the global, self-calibrated, and \cdfourmethod{} methods for the fixed observation:
$$\x_{\text{obs}} = (0.3416, -0.4812, -0.0749,  0.3471, -0.7253,  0.1747, -0.1242, -0.3328, 0.0409, -0.5498) ,$$
which was generated from the fixed full parameter vector:
$$
\theta = (0.25, 0.1, 0, 0, 0,0, 0, 0, 0, 0) ,
$$
with the true parameter of interest being the first two entries, $\phi = (0.25, 0.1)$.

\vspace{3mm}
\noindent
\textbf{Continuous-flow generative model credible set illustration.} This illustration uses the 2-dimensional Gaussian Mixture task, as detailed in Table \ref{tab:benchmark_tasks}. To derive the posterior estimator, we used a Neural Posterior Score Estimator (NPSE) based on a conditional diffusion model \citep{geffner2023compositional}, using the standard architecture from the \texttt{sbi} \citep{BoeltsDeistler_sbi_2025} implementation.  Since the NPSE is a sample-based model, we use the KDE approximation of the HPD score detailed in Algorithm \ref{alg:KDE_score} to build the set $\mathcal{D}' = \{(s(\theta_i; \x_i), \x_i)\}_{i = 1}^B$ and to compute conformal scores $s(\theta;\x)$ for any other fixed $\x$, with Scott's rule used to determine the kernel bandwidth \citep{scott2015multivariate}. We compare the self-calibrated and \locartourmethod{} credible regions for the fixed observation $\x_{\text{obs}} = (0.2651, -0.1454)$, which was generated under the fixed parameter $\theta = (0.15, -0.1)$.

\subsection{Computational Details}
\label{appendix:comp_details}
To derive the partitions for \locartourmethod{} using regression trees, we fixed the minimum number of samples per leaf at 300 for large overall budgets ($B_{\text{all}} = 10000$ or $B_{\text{all}} = 20000$) and at 75 for a small budget ($B_{\text{all}} = 2000$) to ensure well-populated partitions. Post-pruning was also performed via cost-complexity methods (\textit{ccp\_alpha}) to balance partition complexity and predictive performance, with the remaining hyperparameters set to the \textit{scikit-learn} defaults \citep{pedregosa2011scikit}. For \cdfourmethod{}, its sole hyperparameter, the sample size for estimating the empirical CDF, $M$, was fixed at $1000$.

\subsection{Additional results}
\label{appendix:additional_res}
Conditional coverage performance for NPE-based posterior estimators is shown across benchmark tasks for a smaller overall budget ($B_{\text{all}} = 2000$) and a larger one ($B_{\text{all}} = 20000$), in comparison to the original budget of $10000$. For the smaller budget, the performance of our approaches and Global CP is similar, with \cdfourmethod{} being a top performer and the best method for this case, excelling in 9 out of 10 benchmarks. For this limited sample size, the Global CP approach may be a better option than \locartourmethod{}, as its local partitioning strategy is less effective with sparse calibration data. On the other hand, when considering a larger budget, both \locartourmethod{} and \cdfourmethod{} show a clear advantage over competing approaches, with each excelling in 7 out of 10 benchmarks. The best competing method, Global CP, only performs well in 5 tasks, particularly in Lotka-Volterra. Ultimately, these results demonstrate the robustness of our method across different posterior estimator qualities and calibration budgets.
\begin{figure}[ht]
    \centering
    \includegraphics[width=1\columnwidth]{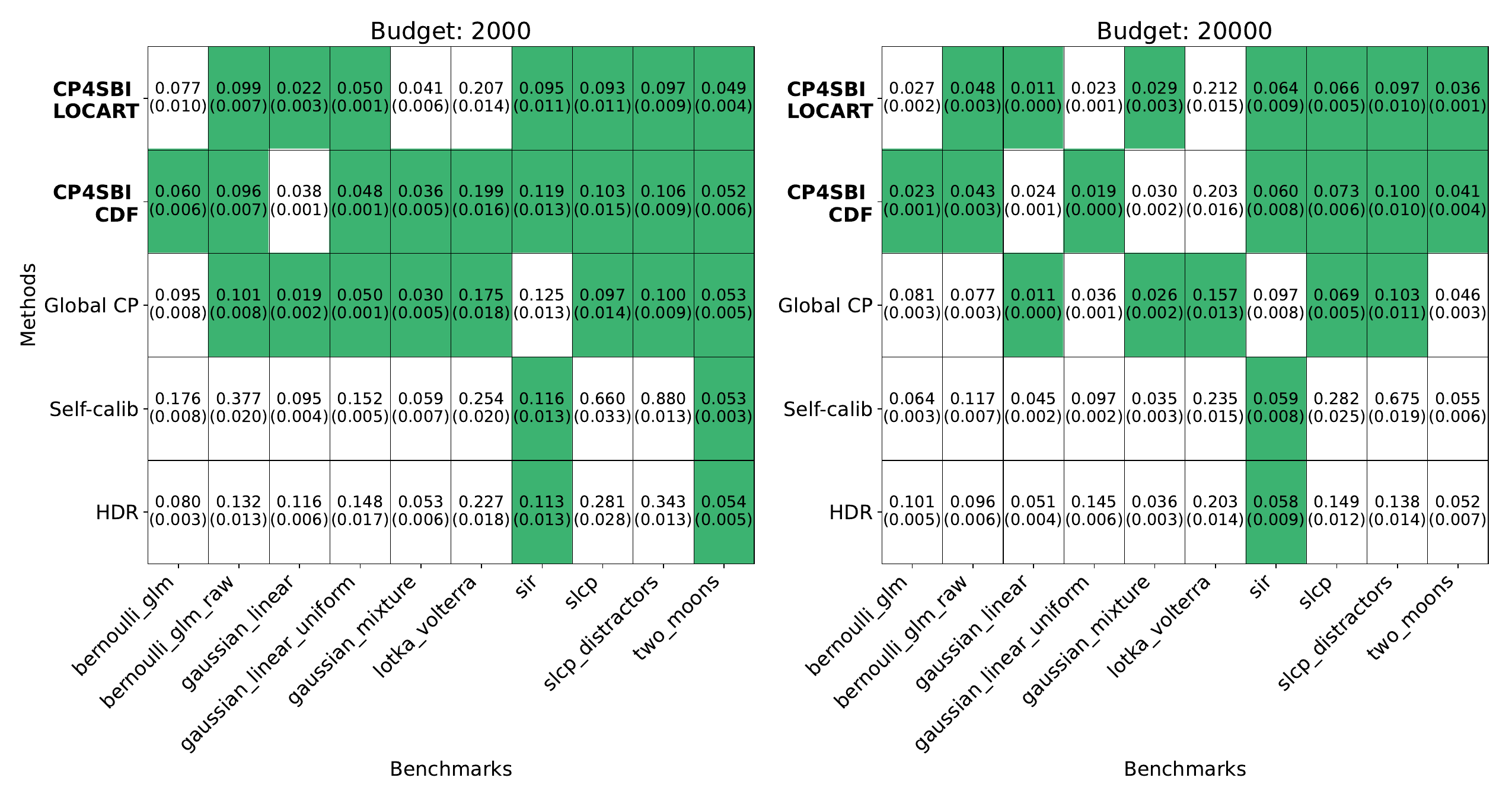}
    \caption{Conditional MAE for NPE-based posterior estimators across benchmark tasks for the overall budgets of $B_{\text{all}} = 2000$ (left) and $B_{\text{all}} = 20000$ (right).  Mean metric values and 95\% confidence intervals, calculated over 30 repetitions, are reported. Green cells indicate statistically significant superiority. For the smaller budget of $B_{\text{all}}=2000$, \cdfourmethod{} shows improved performance, standing out in 9 out of 10 benchmarks. For the larger budget of $B_{\text{all}}=20000$, both \cdfourmethod{} and \locartourmethod{} perform well, each standing out in 7 out of 10 benchmarks. This showcases how our method adapts well across different budgets.}
    \label{fig:NPE_heatmap_figure_MAE_other_budgets}
\end{figure}

As a complement to our coverage analysis, we compare credible set sizes for our 2D benchmarks (Gaussian Mixture, Two Moons, and SIR) in Table \ref{tab:hpd_areas_2d}, as volume calculation becomes prohibitive in higher dimensions. A joint analysis with the conditional MAE heatmaps (Figs \ref{fig:NPE_heatmap_figure_MAE_and_Marginal}, \ref{fig:NPE_heatmap_figure_MAE_other_budgets}) reveals the instability of non-conformal methods, whose sizes fluctuate dramatically; for instance, the Self-calibrated approach is overconfident (too small) for Gaussian Mixture but overly conservative (too large) for Two Moons, with HDR also showing erratic behavior across tasks. This contrasts with the conformal methods, where the nuanced differences highlight the adaptivity of \locartourmethod{}. Compared to the Global approach, \locartourmethod{} produces a slightly larger set for Gaussian Mixture but a noticeably smaller set for the SIR task, all of this while being a top-performer (in terms of MAE) on all these benchmarks. This showcases how \locartourmethod{} successfully adapts the region size based on the task's specific local properties, rather than defaulting to a fixed, non-adaptive behavior.
\begin{table}[h]
\caption{Computed area of HPD regions based on NPE for all 2-dimensional benchmarks.}
\label{tab:hpd_areas_2d}
\centering
\begin{adjustbox}{max width=\textwidth}
\begin{tabular}{ccccccc}
\hline
\textbf{Benchmarks}  & \textbf{\begin{tabular}[c]{@{}c@{}}Simulation \\ Budget\end{tabular}} & \textbf{\begin{tabular}[c]{@{}c@{}}CP4SBI\\ LOCART\end{tabular}} & \textbf{\begin{tabular}[c]{@{}c@{}}CP4SBI\\ CDF\end{tabular}} & \textbf{Global} & \textbf{\begin{tabular}[c]{@{}c@{}}Self\\ calibrated\end{tabular}} & \textbf{HDR}   \\ \hline
Gaussian-Mixture     & 10000                                                                 & 9.920 (0.231)                                                    & 9.891 (0.242)                                                 & 9.848 (0.241)   & 8.135 (0.312)                                                      & 12.977 (0.538) \\
Gaussian-Mixture     & 20000                                                                 & 9.566 (0.232)                                                    & 9.504 (0.246)                                                 & 9.433 (0.250)   & 7.972 (0.296)                                                      & 12.717 (0.715) \\
Two moons            & 10000                                                                 & 0.051 (0.005)                                                    & 0.060 (0.008)                                                 & 0.052 (0.006)   & 0.069 (0.006)                                                      & 0.049 (0.005)  \\
Two moons            & 20000                                                                 & 0.038 (0.003)                                                    & 0.041 (0.005)                                                 & 0.038 (0.004)   & 0.049 (0.005)                                                      & 0.035 (0.002)  \\
SIR $\times 10^{-1}$ & 10000                                                                 & 0.023 (0.004)                                                    & 0.033 (0.009)                                                 & 0.027 (0.006)   & 0.037 (0.011)                                                      & 0.055 (0.017)  \\
SIR $\times 10^{-1}$ & 20000                                                                 & 0.017 (0.005)                                                    & 0.039 (0.016)                                                 & 0.029 (0.010)   & 0.044 (0.018)                                                      & 0.066 (0.029)  \\ \hline
\end{tabular}
\end{adjustbox}
\end{table}

\end{document}